\newcommand{\yes}{\ding{52}}
\newcommand{\no}{\ding{56}}
\theoremstyle{definition}
\newtheorem{example}{Example}[section]
\newtheorem{theorem}{Theorem}
\newtheorem{definition}{Definition}
\tikzstyle{every initial by arrow}=[initial text=]
\tikzstyle{every state}=[fill=none,draw=black,text=black]
\tikzstyle{every state}=[fill=none,draw=black,text=black,inner sep=0pt,minimum
\tikzstyle{every picture}=[->,>=stealth',shorten >=1pt,auto,node distance=2.5cm,
\tikzstyle{sim}=[->,dotted]
\def\citeA{\citet}
\def\cite{\citep}
\newcommand{\optarg}[1]{\ifthenelse{\isempty{#1}}{}{#1}}
\def\xor{\oplus}
\def\override{\Leftarrow}
\def\c{\emph{c}\xspace}
\def\P{\emph{P}\xspace}
\def\Q{\emph{Q}\xspace}
\def\nonevent{not-}
\def\notQ{\nonevent{\Q}\xspace}
\def\HistoryP{\emph{H$_\P$}\xspace}
\def\HistoryQ{\emph{H$_\Q$}\xspace}
\def\HistoryNOTQ{\emph{H$_{not\Q}$}\xspace}
\def\causes{\leadsto}
\def\G{$_\textrm{G}$\xspace}
\def\HP{HP\xspace}
\def\U{\mathcal{U}}
\def\V{\mathcal{V}}
\def\R{\mathcal{R}}
\def\S{\mathcal{S}}
\def\F{\mathcal{F}}
\def\u{\vec{u}}
\def\K{\mathcal{K}}
\newcommand{\vecvar}[2]{\vec{#2}_{#1}}
\newcommand{\W}[1][]{\vecvar{#1}{W}}
\newcommand{\X}[1][]{\vecvar{#1}{X}}
\newcommand{\Y}[1][]{\vecvar{#1}{Y}}
\newcommand{\Z}[1][]{\vecvar{#1}{Z}}
\newcommand{\w}[1][]{\vecvar{#1}{w}}
\newcommand{\x}[1][]{\vecvar{#1}{x}}
\newcommand{\y}[1][]{\vecvar{#1}{y}}
\newcommand{\z}[1][]{\vecvar{#1}{z}}
\def\Yay{\Y \leftarrow \y}
\def\Xax{\X \leftarrow \x}
\def\Xay{\X \leftarrow \y}
\def\Waw{\W \leftarrow \w}
\def\Zaz{\Z \leftarrow \z}
\newcommand{\Xex}[1][]{\X{#1} = \x{#1}}
\newcommand{\Yey}[1][]{\Y{#1} = \y{#1}}
\newcommand{\Xey}[1][]{\X{#1} = \y{#1}}
\newcommand{\Wew}[1][]{\W{#1} = \w{#1}}
\newcommand{\Zez}[1][]{\Z{#1} = \z{#1}}
\def\causalformula{[\Yay]\phi}
\newcommand{\Fx}[1][]{F^{\phi{#1}}}
\newcommand{\Fy}[1][]{F^{\psi{#1}}}
\title{Contrastive Explanation: A Structural-Model Approach}
\author{Tim Miller\\
School of Computing and Information Systems\\University of Melbourne, Melbourne, Australia\\\url{tmiller@unimelb.edu.au}}
\begin{document}

\maketitle

%\maketitle

\begin{abstract}
This paper presents a model of contrastive explanation using structural casual models.
The topic of causal explanation in artificial intelligence has gathered interest in recent years as researchers and practitioners aim to increase trust and understanding of intelligent decision-making. While different sub-fields of artificial intelligence have looked into this problem with a sub-field-specific view, there are few models that aim to capture explanation more generally. One general model is based on \emph{structural causal models}. It defines an explanation as a fact that, if found to be true, would constitute an actual cause of a specific event. However, research in philosophy and social sciences shows that explanations are \emph{contrastive}: that is, when people ask for an explanation of an event -- the \emph{fact} --- they (sometimes implicitly) are asking for an explanation relative to some \emph{contrast case}; that is, ``{Why \P rather than \Q}?''. In this paper, we extend the structural causal model approach to define two complementary notions of \emph{contrastive explanation}, and demonstrate them on two classical problems in artificial intelligence: classification and planning. We believe that this model can help researchers in subfields of artificial intelligence to better understand contrastive explanation.
\end{abstract}

% LocalWords:  contrastive

%\input{todo}

%\tableofcontents

\section{Introduction}

\begin{quote}
The key insight is to recognise that one does not explain events per se, but that one explains why the puzzling event occurred in the target cases but not in some counterfactual contrast case --- \citeA[p. 67]{hilton1990conversational}.
\end{quote}

The recent explosion in research and application of artificial intelligence has seen a resurgence of \emph{explainable artificial intelligence} (XAI) --- a body of work that dates back over three decades; for example, see \cite{buchanan1984rule,chandrasekaran1989explaining,swartout1993explanation}. This resurgence is driven by lack of trust from users \cite{stubbs2007autonomy,linegang2006human,mercado2016intelligent},  and also concerns regarding the ethical and societal implications of decisions made by `black box' algorithms \cite{angwin2016machine}.

One key mode of XAI is explanation. An explanation is a justification or reason for a belief or action.  There has been a recent burst of research on explanation in artificial intelligence, particularly in machine learning. Much of the work in XAI has centred around extracting the causes (or main causes) of a decision or action. While finding causes is an important part of explanation, people do so much more when explaining complex events to each other, and we can learn much from considering how people generate, select, present, and evaluate explanations. 

\citeA{miller2017explanation} systematically surveyed over 250 papers in philosophy, psychology, and cognitive science on how people explain to each other, and noted perhaps the most important finding is that explanations are \emph{contrastive}. That is, people do not ask ``Why \P?''; they ask ``Why \P rather than \Q?'', although often \Q is implicit from the context. Following \citeA{lipton1990contrastive}, we will refer to \P as the \emph{fact} and \Q as the \emph{contrast case}. 

Researchers in social science argue that contrastive explanation is important for two reasons. First, people ask contrastive \emph{questions} when they are surprised by an event and expected something different. The contrast case identifies what they expected to happen \cite{hilton1990conversational,van2002remote,lipton1990contrastive,chin2017contrastive}. This provides a `window' into the questioner's mental model, identifying what they do not know \cite{lewis1986causal}. Second, giving contrastive \emph{explanations} is simpler, more feasible, and cognitively less demanding to both questioner and explainer \cite{lewis1986causal,lipton1990contrastive,ylikoski2007idea}. \citeauthor{lewis1986causal} argues that a contrastive question ``requests information about the features that differentiate
the actual causal history from its counterfactual alternative.'' \cite[p.\ 231]{lewis1986causal}.

\citeA{lipton1990contrastive} defines the answer to a contrastive question as the \emph{Difference Condition}:

\begin{quote}
  To explain why \P rather than \Q, we must cite a causal difference between \P and \notQ, consisting of a cause of \P and the absence of a corresponding event in the history of \notQ. -- \citeA[p.\ 256]{lipton1990contrastive}.
\end{quote}

Following this, the explainer does not need to reason about \emph{or even know about} all causes of the fact --- only those relative to the contrast case. 

As an example, consider an algorithm that classifies images of animals. Presented with an image of a crow, the algorithm correctly identifies this as a crow. When asked for a reason, a good \emph{attribution} would highlight features corresponding the crow: its beak, feathers, wings, feet, and colour --- those properties that correspond to the model of a crow. However, if the question is: ``Why did you classify this as a crow instead of a magpie?'', the questioner already identifies the image as a bird. The attribution that refers to the beak, feathers, wings, and feet makes a poor explanation, as a magpie also has these features. Instead, a good explanation would point to what is different, such as the magpie's white colouring and larger wingspan.

Importantly, the explanation fits directly within the questioner's `window' of uncertainty, and is smaller and simpler, even on this trivial example. AI models, though, are typically more complicated and more structured, implying that contrastive explanation can provide much benefit by adhering to Grice's conversational maxims \cite{grice1975logic} of \emph{quantity}: make your contribution as informative as is required, and do not make it more informative than is required; and \emph{relation}: only provide information that is related to the conversation.

Further to this, \cite{wachter2017counterfactual} argue that explanations using contrastive explanations can be used as a means for explaining individual decisions, providing sufficient explanatory power for individuals to understand and contest decisions, without imposing requirements of opening up the mechanism for decision making.

%. Consider the abstract causal graphs in Figure~\ref{fig:contrastive-causal-graph}, which are larger but still trivial by AI standards. The graph on the left is of the factual case, while the graph on the right is of the contrast case. The dark nodes and thick vertices are the \emph{difference} between the two graphs --- the light nodes and thin vertices are the same.  It shows that changing the second input explains a difference value for the first output. A full attribution would explain all 25 causes in both graphs. However, the difference condition merely needs us to describe those darker regions, which is a simpler task. Importantly, note that the differences are symmetric: we need only explain one set of differences,  rather than both graphs independently. As \citeA{lipton1990contrastive} notes, facts and contrast cases are often closely related, meaning that the difference is often small. For example, ``Why crow rather than magpie?'' is more likely than ``Why crow rather than emu?''.

%\begin{figure}[!th]
%	\centering
%	\begin{minipage}{0.45\textwidth}
%		\centering
%		\input{./figs/contrastive-causal-graph}
%		(a) Fact
%	\end{minipage}
%	\begin{minipage}{0.45\textwidth}
%		\centering
%		\input{./figs/contrastive-causal-graph}
%		(b) Contrast Case
%	\end{minipage}
%	\caption{Contrastive Explanation of Casual Graphs Using the Difference Condition. }
%	\label{fig:contrastive-causal-graph}
%\end{figure}

In this paper, we extend \citeauthor{halpern2005causes-part-II}'s definition of explanation using \emph{structural causal models} \cite{halpern2005causes-part-II} to the case of contrastive explanation, providing a general model of contrastive explanation based on \citeauthor{lipton1990contrastive}'s Difference Condition.
In particular, we define contrastive explanation for two types of questions: \emph{counterfactual} questions and \emph{bi-factual} questions. An counterfactual question is of the form ``Why \P rather than \Q?'', and asks why some fact \P occurred instead of some hypothetical \emph{foil} \Q. A bi-factual question is of the form ``Why \P but \Q?'', and asks why some fact \P occurred in the current situation while some \emph{surrogate} \Q occurred in some \emph{other} factual situation. The difference is that in the former, the foil is hypothetical, while in the latter, the surrogate is actual and we are contrasting two events that happened in different situations. From the perspective of artificial intelligence, the former is asking why a particular algorithm gave an output rather than some other output that the questioner expected, while the latter is asking why an algorithm gave a particular output this time but some (probably different) output another time. 

We define what it means to have a cause of these two contrastive questions, and what it means to explain them. Although it is not possible to prove such a model is `correct', we show that the model is internally consistent, and demonstrate it on two representative examples in artificial intelligence: classification and goal-directed planning. 

\section{Related Work}
\label{sec:lit}

\subsection{Philosophical Foundations}
\label{sec:lit:philosophical-foundations}

In the social sciences, it is generally accepted that explanations are contrastive \cite{miller2017explanation}. The questions that people ask have a contrast case, which is often implicit, and the explanations that people give explain relative to this contrast case. Even when giving an explanation with no question, people explain relative to contrast cases.

\citeA{garfinkel1981forms} seems to be the first to make a case for contrastive explanation\footnote{Although \citeA[p.\ 127]{van1980scientific} attributes the idea of contrastive explanation to Bengt Hannson in an unpublished manuscript circulated in 1974.}. He provides a story about a well-known bank robber Willie Sutton who purportedly replied to journalist who asked why he robbed banks, with: ``That's where the money is.'' \citeauthor{garfinkel1981forms} argues that Sutton answered why he robs [banks/other things], rather than why he [robs/does not rob] banks because he answered to a different contrast case: that of banks vs.\ non-banks, rather than robbing vs.\ not robbing. \citeauthor{garfinkel1981forms} notes that these two different contrasts create two different contexts, and that explanations are relative to these contrastive contexts. An object of explanation is not just a state of affairs, but a ``state of affairs together with a definite \emph{space of alternatives} to it'' \cite[p.\ 21]{garfinkel1981forms}.

At the same time, \citeA{van1980scientific} was also arguing the case of contrastive explanations. He states that the underlying structure of a why--question is: ``Why (is it the case that) \emph{P} \emph{in contrast to} (other members of) \emph{X}?'', in which \emph{P} is the \emph{topic} and \emph{X} is the \emph{contrast class} to \emph{P} \cite[p.\ 127]{van1980scientific}. An answer to such a question has the structure ``\emph{P} \emph{in contrast to} (the rest of) \emph{X} because \emph{A}'' \cite[p.\ 143]{van1980scientific}. \citeauthor{van1980scientific} argues that when a questioner asks such a question, they presuppose that: (1) the topic \emph{P} is true; (2) all other elements of the contrast class \emph{X} are false; and (3) \emph{A} is both true and explanatorily relevant  to the topic. He proposes an explicit relation \emph{R} that determines explanatory relevance.

\citeA{hesslow1983explaining,hesslow1988problem} extends this idea of explanatory relevance and seems to be the first to make a case for the idea of contrast cases themselves defining \emph{explanatory relevance}. He argues that there is a distinction between \emph{determining causes} and \emph{explanatory causes}, with the former being the (often large) set of conditions that contribute to causing an event, and the latter being a subset of the determining causes that are selected due to their \emph{explanatory power}.
\citeauthor{hesslow1988problem}'s theory of explanation is based on two complementary ideas. The first is that of contrastive explanation. He states that:

\begin{quote}
  \ldots the effect or the explanandum; i.e.\ the event to be explained, should be construed, not as an object's having a certain property, but as a \emph{difference} between objects with regard to that property. --- \citeA[p.\ 24]{hesslow1988problem}
\end{quote}

The second is of explanatory relevance. \citeauthor{hesslow1988problem} argues that by explaining only those causes that are different between the two or more objects, the explanation is more relevant to the questioner as it provides those causes that the questioner does not know. In essence, the contrast case provides a window into the particular causes that the questioner does not understand.

\citeauthor{hesslow1983explaining} presents an example: ``Why did the barn catch on fire?''. The explanation that someone dropped a lit cigarette in the hay has strong explanatory power and would satisfy most people. But what about other causes? The presence of oxygen, the hay being dry, and absence of fire sprinklers are all causes, but the cigarette has particular explanatory power because oxygen is always present in barns, and most barns are dry and have no fire sprinklers. The explanation is \emph{contrasting} to these normal cases.

He formalises this notion as follows. Given an object $a$, a property $E$, and a reference class $R$ (the contrast cases), the cause $Ca$ is an \emph{adequate explanation} of $\langle a, E, R\rangle$ iff:
\begin{enumerate}
	\item for all $x$ in $R$, if $Cx$ had been true then $Ex$ would have been true; and
	\item if $\neg Ca$ had been true, then $\neg Ea$ would have been true,
\end{enumerate}

\noindent
in which $Cx$ and $Ex$ refer to the cause $C$ and property $E$ respectively applying to $x$. This states that $Ca$ is an adequate explanation if and only iff (1) if the cause $C$ held on all the other objects $x$ in $R$ (e.g.\ other barns), then the property $E$ would also hold (the other barns would have also caught fire); and (2) if the cause $C$ did not apply to $a$, then the property $E$ would not hold. We can see that (1) does not apply to oxygen, because oxygen is present in other barns that do not catch fire, while for the cigarette this is the case; and that (2) applies to the cigarette --- if the cigarette had not been dropped, the fire would not have occurred.

At a similar time, \citeA{lewis1986causal} proposed a short account of contrastive explanation. According to \citeauthor{lewis1986causal}, to explain why \P occurred rather than \Q, one should offer an event in the history of \P that would not have applied to the history of \Q, if \Q had occurred. For example, he states: ``Why did I visit Melbourne in 1979, rather than Oxford or Uppsala or Wellington? Because Monash University invited me. That is part of the causal history of my visiting Melbourne; and if I had gone to one of the other places instead, presumably that would not have been part of the causal history of my going there'' \citeA[p.\ 229--230]{lewis1986causal}. This has parallels with \citeauthor{hesslow1988problem}'s account \cite{hesslow1983explaining,hesslow1988problem}.

\citeA{temple1988contrast} subsequently argued against the case of contrastive explanation. \citeauthor{temple1988contrast} argued that the question ``Why \P rather than \Q?'' presupposes that \P is true and \Q is not, and that the object of explanation is not to explain why \P and \Q are mutually exclusive, but instead to ask ``Why [\P and not \Q]?''. Therefore, contrastive why--questions are just standard propositional why--questions of the form ``Why \emph{X}?'', but with \emph{X} being [\P and \notQ].

However, \citeA{lipton1990contrastive} argues that this is a language phenomenon, and semantically, explaining ``{Why \P rather than \Q}?'' is not the same as explaining ``Why [\P and not \Q]?". Building on \citeauthor{lewis1986causal}'s interpretation based on the history of events \cite{lewis1986causal}, \citeauthor{lipton1990contrastive} argues that answering ``Why [\P and not \Q]?'' requires an explanation of \P and of \notQ. For example, to answer why the barn burned down rather than not burning down would require a complete attribution of why the barn burned down, including the presence of oxygen, as well as why other barns do not typically burn down. \citeauthor{lipton1990contrastive} argues that this is not what the explainee wants.

\citeA{lipton1990contrastive} proposes that explanation selection is best described using the \emph{Difference Condition}:

\begin{quote}
To explain why \P rather than \Q, we must cite a causal difference between \P and \notQ, consisting of a cause of \P and the absence of a corresponding event in the history of \notQ. --- \citeA[p.\ 256]{lipton1990contrastive}.
\end{quote}

This differs from the definition of contrastive explanation from \citeA{lewis1986causal} in that instead of selecting a cause of \P that is not a cause of \Q if \Q had occurred, we should explain the \emph{actual difference} between \P and \notQ; that is, we should cite a cause that is in the actual history of \P, and an event that did not occur in the actual history of \notQ.

We can formalise this as the following, in which $\causes$ is the causal relation, and \HistoryP and \HistoryNOTQ are the \emph{history} of \P and \notQ respectively, and \HistoryQ is the hypothetical history of \Q had it occurred:

\begin{center} 
\begin{tabular}{ll@{$~\land~$}l@{~~~where~~~}l@{$~\land~$}l}
\toprule
\citeauthor{lewis1986causal}       & $c \causes P$ & $c \not\causes Q$ & $c \in $~\HistoryP & $c \notin $~\HistoryQ\\
\citeauthor{lipton1990contrastive} & $c \causes P$ & $c' \causes $~\Q & $c \in $~\HistoryP & $c' \notin $~\HistoryNOTQ\\
\bottomrule
\end{tabular}
\end{center}

Thus, \citeauthor{lewis1986causal}'s definition \cite{lewis1986causal} cites some alternative history of facts in which \Q occurred, whereas \citeauthor{lipton1990contrastive}'s definition \cite{lipton1990contrastive} refers to the \emph{actual} history of \notQ. Further, \citeauthor{lewis1986causal}'s definition states that the explanation should be \emph{an} event \c (or perhaps set of events), whereas \citeauthor{lipton1990contrastive}'s states that the explanation is the difference between \c and \c'.

It was generally accepted at the time that \citeA{lipton1990contrastive} proposed his ideas, that facts and contrast case are incompatible events \cite{temple1988contrast,garfinkel1981forms,van1980scientific,ruben1987explaining}; for example, a barn cannot both burn down and not burn down, or leaves cannot be blue and yellow at the same time. However, \citeauthor{lipton1990contrastive} notes that compatible contrastive cases are also valid. For example, we can ask why one leaf is blue while another leaf is yellow. It is perfectly possible that both leaves could be blue, but we are looking for explanations as to why only one of them is.

\citeA{ylikoski2007idea} provides a more refined model to explain this, noting that incompatible vs.\ compatible contrast cases are two different types of question. The first is when we contrast two incompatible contrasts of the same process; one the fact and one the `imagined' \emph{foil}, such as one a leaf being yellow instead of blue. The fact and the foil must be inconsistent. The second is when we contrast two facts from two actual and different processes. That is, both facts actually occurred, such as one yellow leaf and one blue leaf. \citeauthor{ylikoski2007idea} calls the second fact a \emph{surrogate} for a counterfactual claim about the first process. He claims that the surrogate is used to simplify the explanation --- as one simply needs to find the difference between the fact and surrogate, which is consist with the idea from \citeA{lipton1990contrastive} that this is cognitively a simpler problem. 
 
\citeA{van2002remote} divide explanatory questions into four types:

{\centering
	
	\vspace{2mm}
	
	\noindent
	\begin{tabular}{lp{11.5cm}}
		\emph{Plain fact}: &  Why does object $a$ have property $P$?\\
		\emph{P-contrast}: & Why does object $a$ have property $P$, rather than property $Q$?\\
		\emph{O-contrast}: & Why does object $a$ have property $P$, while object $b$ has property $Q$?\\
		\emph{T-contrast}: &  Why does object a have property $P$ at time $t$, but property $Q$ at time $t'$?
	\end{tabular}
}

This defines three types of contrast: within an object (P-contrast), between objects themselves (O-contrast), and within an object over time (T-contrast). P-contrast is the standard `rather than' interpretation, while O-contrast and T-contrast correspond to  \citeauthor{ylikoski2007idea}'s notion of different processes \cite{ylikoski2007idea}.

In Section~\ref{sec:why-questions}, we will formalise the notion of contrastive questions using the framework of \citeA{halpern2005causes-part-I}, and will show that the reasoning of \citeA{ylikoski2007idea} is natural with respective to structural equations and fits the types of questions we would expect in explainable artificial intelligence. The concept of \emph{P-contrast} is captured as \emph{counterfactual} explanations, while \emph{O-contrast} and \emph{T-contrast} are captured as \emph{bi-factual} explanations.

\subsection{Computational Approaches}
\label{sec:lit:computational}

In artificial intelligence, contrastive questions are not just a matter of academic interest. User studies investigating the types of questions that people have for particular systems identify ``Why not?'' questions and contrast classes as important. \citeA{lim2009assessing} showed that ``Why not?'' questions are important in context-aware applications, while \citeA{haynes2009designs} found that users of their virtual aviation pilot system particularly sought information about contrast cases. Given that this is consistent with views from philosophy and psychology, it makes sense to consider the difference condition as key to answering these questions.

The idea of why-not questions in artificial intelligence was around prior to these studies. The explanation module of the MYCIN expert system explicitly allowed users to pose questions such as ``Why didn't you do \emph{X}?'' \cite{buchanan1984rule}, which is providing a foil for the fact. More recently, there has been a keen interest in answering why-not questions for many different sub-fields of artificial intelligence, including machine learning classification \cite{dhurandhar2018explanations,mothilal2020explaining}, belief-desire-intention agents \citeauthor{winikoff2017debugging}, reinforcement learning \cite{madumal2020explainable,waa2018contrastive}, classical planning \cite{krarup2019model,sreedharan2018hierarchical}, and image classification \cite{akula2020cocox}, to cite just a few papers.

However, while the idea of why-not questions is being addressed, there is not a clear and consistent understanding of contrastive explanation in the explainable AI community. First, while there many of papers what answer why-not questions, most of these are \emph{counterfactual} solutions, not \emph{contrastive}, meaning that they to not make explicit use of the \emph{difference} between the fact and the foil, with some exceptions; for example, \citeA{madumal2020explainable}. This may seem trivial, and from an algorithmic perspective perhaps it is. But from the perspective of helping people to understand models and decisions, this is a crucial step that requires further research in human factors and human-computer interaction.

Second, there is no general understanding in the community about contrastive and counterfactual explanation, with much of it built on the intuitions of authors. This leads to algorithms that can be useful, but terminology and solutions that are not aligned. For example, \citeA{dhurandhar2018explanations} use the term \emph{pertinent negatives/positives} to refer to foils, while \citeA{akula2020cocox} use the term \emph{fault lines}, and \cite{krarup2019model} use \emph{foil}. 

Third, while there is work on counterfactual explanation that contributes to contrastive explanation, there is no work that the author is aware of that addresses why we call \emph{bi-factual} questions: why $P$ this time but $Q$ last time? As noted by \citeA{wang2019designing} in their study of clinicians using AI-driven diagnosis in intensive care units, some clinicians ``wanted to see past patients which had similar presentations (e.g., complaints, vital signs), but not necessarily similar diagnoses (decision outcomes)". This is a request for a bi-factual contrastive explanation. Showing similar similar cases with different outcomes, and importantly, showing the \emph{difference} between the two helps people to understand why the outcomes differ. Currently, this form of why-not question is lacking in explainable AI literature.

The aim of this paper is to mitigate some of the issues above by providing a \emph{general} model of contrastive explanation that can be mapped to other models in artificial intelligence, such as machine learning, planning, reinforcement learning, case-based reasoning, BDI agents, etc. By presenting a single, coherent general model, we can begin to understand the similarities and differences between proposed solutions and can start to plug holes, such as lack of bi-factual explanation, in explainable AI.

As far as the author is aware, \citeA{kean1998characterization} is the only other author to consider a general computational model of contrastive explanation. \citeauthor{kean1998characterization}'s model of contrastive explanation is also built on \citeauthor{lipton1990contrastive}'s Difference Condition \cite{lipton1990contrastive}. Given a knowledge base $\mathcal{K}$ and an observation $P$, \citeauthor{kean1998characterization} proposes a simple model to calculate why $P$ occurred instead of $Q$. \citeauthor{kean1998characterization} provides a definition of a \emph{non-preclusive} contrastive explanation for ``\emph{Why \P rather than \Q?}'', which refers to the propositions that are required for \P to hold but not \Q. The definition of a \emph{preclusive} contrastive explanation uses the Difference Condition, and, as in this paper, identifies that the contrastive explanation must reference both the causes of $\P$ as well as causes of $Q$ that were not true. There are three key differences between \citeauthor{kean1998characterization}'s model and the structural approach model approach in this paper. First, \citeauthor{kean1998characterization}'s model was published when the understanding of causality in artificial intelligence was in its infancy, and is therefore built on propositional logic, rather than on a logic of causality and counterfactuals, which is more suitable. Second, \citeauthor{kean1998characterization}'s model considers only `rather than' questions, and not contrastive explanations with surrogates rather than foils. Third,  and most importantly \citeauthor{kean1998characterization}'s model is in fact a model of abductive reasoning, in which assumptions are made about the truth of certain propositions to find the `best' explanation. As such, this is a model of the cognitive process of contrastive explanation from the perspective of an explainer, and the task is to derive an explanation for an observation.  There is no explainee in \citeauthor{kean1998characterization}'s model. In contrast, our model is not concerned with abductive reasoning, but instead models an explainer with complete knowledge of the explanation using the difference condition to communicate to an unaware explainee.
\section{Structural Models}
\label{sec:hp}

In this paper, we build definitions of contrastive questions and contrastive explanations based on \citeauthor{halpern2005causes-part-I}'s \emph{structural models} \cite{halpern2005causes-part-I}.  As opposed to previous models, which use logical implication or statistic relevance, Halpern and Pearl's definition is based on counterfactuals, modelled using \emph{structural equations}.

In Part~I \cite{halpern2005causes-part-I} of their paper, Halpern and Pearl provide a formal definition of causality. A \emph{causal model} is defined on two sets of variables: \emph{exogenous} variables, who values are determined by factors external to the model, and \emph{endogenous} variables, who values are determined by relationships with other (exogenous or endogenous) variables.

\subsection{Models}
Formally, a \emph{signature} $\S$ is a structure $(\U, \V, \R)$, in which  $\U$ is a set of exogenous variables, $\V$ a set of endogenous variables, and $\R$ is a function that defines the range of values for every variable $Y \in \U \cup \V$; that is, the range of a variable $Y$ is $\R(Y)$.

A \emph{causal model} is a pair, $M = (\S, \F)$, in which $\F$ defines a set of functions, one for each endogenous variable $X \in \V$, such that $F_X : (\times_{U\in\U}\R(U)) \times (\times_{Y\in \V-\{X\}}\R(Y)) \rightarrow \R(X)$ determines the value of $X$ based on other variables in the model. A causal model is said to be \emph{recursive} if it is acyclic.

A \emph{context}, $\u$, is a vector that gives a unique value to each exogenous variable $u \in \U$.  A model/context pair $(M, \u)$ is called a \emph{situation}.

\citeA{halpern2005causes-part-I} extend this basic structural equation model to support modelling of counterfactuals.
To represent counterfactual models, the model $M_{\Xax}$ defines the new causal model given a vector $\X$ of endogenous variables in $\V$ and their values $\x$ over the new signature $\S_{\X} = (\U, \V - \X, R|_{\V - \X})$. This represents the model $M$ with the values of $\X$ overridden by $\x$. Formally, this model is defined as $M_{\Xax} = (\S_{\X}, \F^{\Xax})$, in which each $F_Y^{\Xax}$ in $\F$ is defined by setting the values of $\X$ to $\x$ in function $F_Y$.

\subsection{Language}
To reason about these structures, in particular, counterfactuals, \citeA{halpern2005causes-part-I}  present a simple but powerful language. Given a signature $\S = (\U, \V, \R)$, variables $X \in \V$ and values $x \in \R(X)$, a formula of the form $X=x$ is called a \emph{primitive event}, and describes the event in which variable $X$ is given the value $x$. A \emph{basic causal formula} is of the form $[Y_1 \leftarrow y_1, \ldots, Y_n \leftarrow y_n]\phi$, in which $\phi$ is any Boolean combination of primitive events, each $Y_i$ is a variable in $\V$ (endogenous variable), and $y_i \in \R(Y_i)$. We will follow \citeauthor{halpern2005causes-part-I} in abbreviating this formula using $\causalformula$, in which $\Y$ and $\y$ are vectors of variables and values respectively. A \emph{causal formula} is a Boolean combination of basic causal formulas. If $\Y$ is empty, this is abbreviated as just $\phi$.

\subsection{Semantics}
Intuitively, a formula $\causalformula$ for a situation $(M, \u)$ states that $\phi$ would hold in the model if the counterfactual case of $Y_i = y_i$ for each $Y_i \in \Y$ and $y_i \in \y$ were to occur. More formally, Halpern and Pearl define $(M, \u) \models \phi$ to mean that $\phi$ holds in the model and context $(M, \u)$. The $\models$ relation is defined inductively by defining $(M, \u) \models [\Yay](X = x)$ as holding if and only if the unique value of $X$ determined from the model $M_{\Yay}$ is $x$, and defining Boolean combinations in the standard way.

\begin{example}
\label{example:arthropod-causal-model}
This section presents a simple example of a hypothetical system that classifies images of arthropods into several different types, taken from \citeA{miller2017explanation}. The categorisation is based on certain physical features of the arthropods, such as number of legs, number of eyes, number of wings, etc.  
Table~\ref{tab:arthropod} outlines a simple model of the features of arthropods for illustrative purposes.
\begin{table}[!th]
\centering
\begin{tabular}{lcccccc}
\toprule
 &  & & & \textbf{Compound} & \\[-1mm]
\textbf{Type} & \textbf{No.\ Legs} & \textbf{Stinger} & \textbf{No.\ Eyes} & \textbf{Eyes} & \textbf{Wings}\\

\midrule
Spider & 8 & \no  & 8 & \no  & 0\\
Beetle & 6 & \no  & 2 & \yes  & 2\\
Bee    & 6 & \yes & 5 & \yes  & 4\\
Fly    & 6 & \no  & 5 & \yes & 2\\
\bottomrule
\end{tabular}
\caption{A simple lay model for distinguishing common arthropods.}
\label{tab:arthropod}
\end{table}

The causal model for this has endogenous variables $L$ (number of legs), $S$ (stinger), $E$ (number of eyes), $C$ (compound eyes),  $W$ (number of wings), and $O$ (the output). $U_1$ is an exogenous variable that determines the actual type of the arthropod, and therefore causes the values of the properties such as legs, wings, etc. The variables $L$, $E$, and $W$ range over the natural numbers, while $S$ and $C$ are both Boolean. The output $O$ ranges over the set $\{Spider, Beetle, Bee, Fly, Unknown\}$. A causal graph of this is shown in Figure~\ref{fig:arthropod-causal-graph}.
The functions are clear from Table~\ref{tab:arthropod}; for example, $F_O(8, false, 8, no, 0) = Spider$, and $O=Unknown$ for anything not in the table.

\begin{figure}[!ht]
  \centering
  \begin{subfigure}[t]{0.48\textwidth}
    \vskip 0pt
    \centering
\begin{tikzpicture}
\begin{scope}[node distance=1.0cm and 0.25cm,font=\scriptsize]
\tikzstyle{empty state}=[state, fill=none,inner sep=0pt,opacity=0.0]

\node[empty state] (V) {};  //empty state so this aligns with other subfigure

\node[state] (O) [above left=of V]{$O$};
%\node[state] (O) {$O$};
\node[state] (E) [above=of O]  {$E$};
\node[state] (S) [left=of E]  {$S$};
\node[state] (L) [left=of S]  {$L$};
\node[state] (C) [right=of E]  {$C$};
\node[state] (W) [right=of C]  {$W$};
\node[state] (U) [above=of E]  {$U_1$};

\path[every node/.style={sloped,anchor=south,auto=false,font=\tiny}]

(U) edge[]   node[sloped=true]  {$$}   (L)
(U) edge[]   node[sloped=true]  {$$}   (S)
(U) edge[]   node[sloped=true]  {$$}   (E)
(U) edge[]   node[sloped=true]  {$$}   (C)
(U) edge[]   node[sloped=true]  {$$}   (W)
(L) edge[]   node[sloped=true]  {$$}   (O)
(S) edge[]   node[sloped=true]  {$$}   (O)
(E) edge[]   node[sloped=true]  {$$}   (O)
(C) edge[]   node[sloped=true]  {$$}   (O)
(W) edge[]   node[sloped=true]  {$$}   (O)

;
\end{scope}

\end{tikzpicture}
    \caption{Causal graph for arthropod algorithm defined in       Example~\ref{example:arthropod-causal-model}}
    \vfill
    \label{fig:arthropod-causal-graph}
  \end{subfigure}
  \begin{subfigure}[t]{0.48\textwidth}
    \vskip 0pt
    \centering
\begin{tikzpicture}
\begin{scope}[node distance=1.0cm and 0.25cm,font=\scriptsize]
\tikzstyle{empty state}=[fill=none,inner sep=0pt,minimum size=2.5mm,opacity=0.0]

\node[state] (V) {$V$};

\node[state] (O) [above left=of V]{$O$};
\node[state] (E) [above=of O]  {$E$};
\node[state] (S) [left=of E]  {$S$};
\node[state] (L) [left=of S]  {$L$};
\node[state] (C) [right=of E]  {$C$};
\node[state] (W) [right=of C]  {$W$};
\node[state] (U1) [above=of E]  {$U_1$};

\node[state] (A) [right=of W]  {$A$};
\node[state] (U2) [above=of A]  {$U_2$};

\path[every node/.style={sloped,anchor=south,auto=false,font=\tiny}]

(U1) edge[]   node[sloped=true]  {$$}   (L)
(U1) edge[]   node[sloped=true]  {$$}   (S)
(U1) edge[]   node[sloped=true]  {$$}   (E)
(U1) edge[]   node[sloped=true]  {$$}   (C)
(U1) edge[]   node[sloped=true]  {$$}   (W)
(L) edge[]   node[sloped=true]  {$$}   (O)
(S) edge[]   node[sloped=true]  {$$}   (O)
(E) edge[]   node[sloped=true]  {$$}   (O)
(C) edge[]   node[sloped=true]  {$$}   (O)
(W) edge[]   node[sloped=true]  {$$}   (O)

(U2) edge[]   node[sloped=true]  {$$}   (A)
(O) edge[]   node[sloped=true]  {$$}   (V)
(A) edge[]   node[sloped=true]  {$$}   (V)

;
\end{scope}

\end{tikzpicture}
        \caption{Causal graph for extended arthropod algorithm defined in Example~\ref{example:extended-arthropod-causal-model}}
    \label{fig:extended-arthropod-causal-graph}
  \end{subfigure}
\end{figure}
\end{example}

It is important to note that SCMs like this can be used to model correlative machine learning algorithms. It is not the physical features of number of legs, eyes, etc., that cause a real spider to be a spider -- it is its genetic makeup that causes this. This genetic makeup causes both the spider's physical features and it to be a spider. However, here we model that a correlation is found between the variables represent physical features and the type, and the algorithm uses this to predict the arthropod type. However, the \emph{causal} direction of the prediction model is from physical features to arthropod type because this is the way the prediction is made. The variables representing the physical features cause the decision, even though this does not model the causes in the real world. Of course, a model that represents causes in the real world may offer a better explanatory model, but even in cases where we do not have one, a straightforward mapping from features to outputs fits into our model of contrastive explanation.

\begin{example}
	\label{example:extended-arthropod-causal-model}
	Consider a extension to the arthropod algorithm in Example~\ref{example:arthropod-causal-model} that verifies manual annotations on arthropod images. Images are labels with one of $Spider$, $Beetle$, $Bee$, $Fly$, or no label ($Unknown$), and the new algorithm extends the previous one to check whether the manual annotations are correct or not. The same categories exist, but some images are not labelled at all. To model this, we add a new exogenous variable $U_2$, which determines the new endogenous variable $A$ -- the annotation on the image. A second endogenous variable $V$ with domain $\{Pass, Fail\}$ determines whether the classifier output $O$ corresponds with $A$. The causal graph is shown in Figure~\ref{fig:extended-arthropod-causal-graph}. The function $F_V(O,A)=Pass$ if either $A=O$ or $A=Unknown$ or $O=Unknown$, to avoid too many false negatives.  Otherwise, $F_V(O,A)=Fail$.
\end{example}
\section{Contrastive `Why' Questions}
\label{sec:why-questions}

The basic problem of explanation is to answer a \emph{why--question}. According to \citeA{bromberger1966whyquestions}, a why--question is just a \emph{whether--question}, preceded by the word `why'. A whether--question is an interrogative question whose correct answer is either `yes' or `no'.  The \emph{presupposition} within a why--question is the fact referred to in the question that is under explanation, expressed as if it were true (or false if the question is a negative sentence). For example, the question ``\emph{why did they do that?}'' is a why-question, with the inner whether-question being ``\emph{did they do that?}'', and the presupposition being ``\emph{they did that}''. 

However, as discussed already, why--questions are structurally more complicated than this: they are \emph{contrastive}. The question then becomes: \emph{what is a contrastive why--question?}

In this section, we extend \cite{ylikoski2007idea}'s argument for the existence of (at least) two different types of contrastive why--questions \cite{ylikoski2007idea}. In brief, the first asks why some fact happened rather than some other thing, called the \emph{foil}, while the second asks why some fact happened in one situation while another fact, called the \emph{surrogate}, happened in another (presumably similar) situation. The first type we call `rather than' or \emph{counterfactual} explananda, because in this case, the foil is a counterfactual possibility to the fact.  Intuitively, the fact and the foil are incompatible: it is not possible that both of them could have occurred. This is consistent with \citeauthor{temple1988contrast}'s reading that \Q offers an ``exclusive alternative in the circumstances'' \cite{temple1988contrast}.  The second type, we call \emph{bi-factual} explananda, because both the fact and the surrogate events actually occurred, but just in different contexts. The explainee is using the surrogate as a reference point to contrast against the fact.  Using \citeauthor{halpern2005causes-part-I}'s structural models \cite{halpern2005causes-part-I}, we more crisply demonstrate why there is a difference between these two questions based on the relationships between the situations in which the fact and its contrast case (foil or surrogate) did and did not occur respectively.

\subsection{Counterfactual Explananda}
%%%%%%%%%%%%%%%%%%%%%%%%%%%%%%%%%%%%%%%%%%%%
%% Formally defining counterfactual why-questions
%%%%%%%%%%%%%%%%%%%%%%%%%%%%%%%%%%%%%%%%%%%%

Given two events \P and \Q, \citeA{lipton1990contrastive} defines a contrastive why--question as:
\begin{equation}
\label{defn:contrastive-question-informal}
\textrm{Why \P rather than \Q?}
\end{equation}
For a counterfactual explananda, this means that, in some situation, the fact $\P$ occurred and the explainee is asking why foil $\Q$ did not occur in that situation instead. To semi-formalise this in structural models, a counterfactual why--question, given a situation $(M, \u)$,  is:
\begin{equation}
\label{defn:contrastive-question-one-situation}
\textrm{Why $(M, \u) \models \phi$ rather than $\psi$~?}
\end{equation}
in which $\phi$ is the fact and $\psi$ is the foil. This assumes that $\phi$ is actually true in the situation $(M,\u)$, and that $\psi$ is not. The linguistic reduction to ``\emph{Why \P and \notQ?}'' is:
\begin{equation}
\label{defn:contrastive-question-one-situation-counterfactual}
\textrm{Why $(M, \u) \models \phi \land \neg\psi$?},
\end{equation}
%

\begin{comment}
which can be further reduced to:
%
\begin{equation}
\label{defn:contrastive-question-one-situation-contrastive}
\textrm{Why $(M, \u) \models \phi$ and $(M, \u) \not\models \psi$?},
\end{equation}
%
using \citeauthor{halpern2000axiomatizing}'s axiomatisation of causality with structural models \citeA{halpern2000axiomatizing}:

\begin{center}
\begin{tabular}{lll}
	     & $M \models \phi \land \psi$\\
$\equiv$ & $M \models \phi$ and $M \models \psi$\\
$\equiv$ & $M\models \phi$ and $M \not\models \psi$
\end{tabular}
\end{center}
\end{comment}

To answer the question in Equation~\ref{defn:contrastive-question-one-situation-counterfactual}, one could argue that an explanation of such a case is a proof of $\phi$ and a counter-example for $\psi$. However, as argued by \citeA{lipton1990contrastive}, this is not really what is asked by ``\emph{Why $\phi$ rather than $\psi$?}. The `rather than' is asking for a relationship between the causes of $\phi$ and the causes (or non-causes) of $\psi$. As a counterexample to the reductionist argument, \citeauthor{lipton1990contrastive} notes that we can answer a `rather than' question without knowing all causes of the events. For instance, take the arthropod description from Example~\ref{example:arthropod-causal-model}, and a question as to why the algorithm classified a particular image as a Bee rather than a Fly. Assume that we only know the value of one variable in the model: $W$ --- the number of wings. We cannot give the cause of $O=Bee$ if we do not know the values of the other variables\footnote{Although in this trivial example, technically we could infer them all, but this is a property of the particular example, not of `rather than' questions and structural models in general.}. However, we can still give a perfectively satisfactory answer to the question: it is a Bee rather than a Fly because it has four wings instead of two. As such, `rather than' questions must be asking something different to just ``\emph{Why $\phi$ and why $\neg\psi$?}'', for which we need to know all causes for both $\phi$ and $\psi$.

These counterfactual explananda make sense as why--questions in artificial intelligence. Given the arthropod classification example, a `rather than' question represents an observer asking  why the output was a particular arthropod rather some other incompatible foil case; which would presumably often be the answer they were expecting.

In this paper, we assume that in counterfactual explananda, $\phi$ and $\psi$ are incompatible. It is clear that questions such as ``{Why $x \leq 5$ rather than $x \geq 0$}, where $x=4$ and therefore both fact and foil are true, do not make sense. However, one could argue that it is possible to ask `rather than' questions with compatible fact and foils over different variables; for example ``{Why $x=4$ rather than $y=5$}?''. It is not difficult to find a structural model such that $x=4$ and $y=5$. However, the value of $y$ in the actual situation must be something other than $5$, otherwise the question does not make sense. So, the question is really ``{Why $x=4 \land y=4$ rather than $x=4 \land y=5$}?'', which is incompatible. For this reason, we make the reasonable assumption that $\phi$ and $\psi$ always refer to the same variables and they are incompatible in the given situation. 

\subsection{Bi-factual Explananda}
%%%%%%%%%%%%%%%%%%%%%%%%%%%%%%
%% Second type of question
%%%%%%%%%%%%%%%%%%%%%%%%%%%%%%%
%
As outlined in Section~\ref{sec:lit}, \citeA{ylikoski2007idea} argues that some contrastive why--questions can have compatible facts and foils; although he terms a compatible foil as a \emph{surrogate}. To be compatible, he argues that they must occur as part of two different `processes'.

We model this second type of contrastive question, called a \emph{bi-factual} explananda, by modelling the two different processes as two different situations:
\begin{equation}
\label{defn:contrastive-question-two-situations}
\textrm{Why $(M, \u) \models \phi$ but $(M', \u') \models \psi$?}
\end{equation}
in which the $(M, \u)$ and $(M', \u')$ are two different situations, including two different models $M$ and $M'$, $\phi$ is the fact, and $\psi$ is the surrogate. Note the absence of `rather than' in the question. Linguistically, this makes sense because both the fact and the surrogate are actual --- there is no hypothetical case.

As a question in explainable AI, this question has a clear interpretation that $M$ and $M'$ refer to two different algorithms and $\u$ and $\u'$ define different `inputs' to the algorithms.
For the arthropod example, a valid question is why the algorithm produced the output $\phi$ for input image $J$, while some previous execution of the algorithm produced the different output $\psi$ for different image $K$ (note that $M=M'$ in this example). The observer is trying to understand why the outputs were different, when she expected $\phi$ to be $\psi$ like it was in a previous instance. Another example is the case noted by \citeA{wang2019designing}, discussed in Section~\ref{sec:lit:computational}, of clinicians wanting to compare similar cases with different outcomes.

In the case where $M \neq M'$, an example is in which model $M'$ is an updated version of $M$ --- for example, new data has been feed into a learning approach to produce a more refined model ---, and the explainee is asking for why the result has changed between the two models, potentially with $\u = \u'$.

Although not naturally worded as a `rather than' question, it could be argued that the question is actually a `rather than' question in which the person is asking ``{Why $\phi$ this time and $\psi$ last time \emph{rather than} $\phi$ (or $\psi$) both times?}'':
\begin{equation}
\label{defn:contrastive-question-two-situations-rather-than}
\begin{gathered}
\textrm{Why $(M, \u) \models \phi$ and $(M', \u') \models \psi$}\nonumber\\
\textrm{~rather than~}\\
\textrm{$(M, \u) \models \phi$ and $(M', \u') \models \phi$~~ or~~ $(M, \u) \models \psi$ and $(M', \u') \models \psi$~?}\nonumber
\end{gathered}
\end{equation}
If we reduce this using the template of ``\emph{\P and \notQ}'', and simplify, the result is:
\begin{equation}
\begin{gathered}
\label{defn:contrastive-question-two-situations-rather-than-simplified}
\textrm{Why $(M, \u) \models \phi \land \neg\psi$ but $(M', \u') \models \psi \land \neg\phi$~?}
\end{gathered}
\end{equation}

This is just the same as the question in Equation~\ref{defn:contrastive-question-two-situations}, however, it assumes that the fact and surrogate are incompatible. This assumption is too strong, because a perfectly valid question is why two different situations are producing the \emph{same} outcome, despite the differences in the situation: the explainee expects the two outcomes to be different and wants an explanation as to why they are the same.

In this section, we have demonstrated a case for two types of contrastive why--question: counterfactual and bi-factual explananda. In the remainder of the paper, we use structural causal models to define what answers to these questions look like, starting with how to define \emph{contrastive cause} (Section~\ref{sec:contrastive-cause}) and then \emph{contrastive explanation} (Section~\ref{sec:contrastive-explanation}).

\section{Contrastive Cause}
\label{sec:contrastive-cause}

Before we turn to contrastive explanation, we define \emph{contrastive cause}. Explanations typically cite only a subset of the actual causes of an event, and research shows that various different criteria are used to select these, such as their abnormality, or epistemic relevance; see \citeA{miller2017explanation} for a discussion of these. In Section~\ref{sec:contrastive-explanation}, we build on the definition of explanation based on epistemic relevance by \citeA{halpern2005causes-part-II}. However, to do this, we first need to define what a contrastive cause is. 

Informally, a contrastive cause between $\phi$ and $\psi$ is a pair, in which the first element is a cause of $\phi$ and the second element is a cause of $\psi$. Intuitively, a contrastive cause $\langle A, B\rangle$ specifies that $A$ is a cause of $\phi$ that does not cause $\psi$, while $B$ is some corresponding event that causes $\psi$ but does not cause $\phi$. This is consistent with existing philosophical views; e.g.\ \citeA{ruben1987explaining} defines contrastive explanations as conjunctions between history of the contrasting events. The particular definition depends whether the why--question is counterfactual or bi-factual.

\subsection{Non-contrastive Cause}

Our definition of contrastive cause extends \citeauthor{halpern2005causes-part-I}'s definition of \emph{actual cause} \cite{halpern2005causes-part-I}. In their definition, causes are conjunctions of primitive events, represented as $\Xex$, while the events to be described are Boolean combinations of primitive events. 

\citeA{halpern2005causes-part-I} define two types of cause: \emph{sufficient cause} and \emph{actual cause}. Intuitively, a sufficient cause of an event in a situation is a conjunction of primitive events such that changing the values of some variables in that conjunct would cause the event not to occur. An actual cause is simply a minimal sufficient cause; that is, it contains no unnecessary conjuncts.

More formally, the conjunction of primitive events $\Xex$ is an actual cause of event $\phi$ in a situation $(M, \u)$ if the following three properties hold:

\begin{description}
	
	\item [AC1] $(M, \u) \models \Xex \land \phi$ --- that is, both the event and the cause are true in the actual situation.
	
	\item [AC2] There is a set $\W \subseteq \V$ and a setting $\x'$ of variables $\X$ such that if $(M, \u) \models \W = \w$ then $(M, \u) \models [\Xax', \Waw]\neg \phi$ --- that is, if $\X$ did not have the values $\x$ and all variables in $W$ remain the same, then event $\phi$ would not have occurred\footnote{Note that this is the later definition from \citeA{halpern2015modification}, which is simplified compared to the original definition of \citeA{halpern2005causes-part-I}. Halpern argues this updated definition is more robust.}.
	
	% \item there is some other set of values $\w$ for variables $\W$ and some set of counterfactual values $\x'$ for $\X$ (where $\X \cap \W = \emptyset$) such that $(M, \u) \models [\Xax', \Waw]\neg\phi$ --- that is, if $\X$ did not have the values $\x$, then event $\phi$ would not have occurred; and
	
	\item [AC3] $\X$ is minimal; no subset of $\X$ satisfies AC1 and AC2 -- that is, there are no unnecessary primitive events in the conjunction $\Xex$.
	
\end{description}

A \emph{sufficient cause} is simply the first two items above --- that is, a non-minimal actual cause.

Throughout the rest of this paper, we use the term \emph{partial cause} to refer to a subset of conjunctions of an actual cause.

\begin{example}
	Consider the arthropod example from Example~\ref{example:arthropod-causal-model}. $L=6$ (6 legs) is an actual cause of $O=Bee$ under the situation $u_3$ corresponding to line 3 of Table~\ref{tab:arthropod}. AC1 holds trivially because $L=6$ is in $u_3$ and $O=Bee$ is the output. AC2 holds because whenever $L \neq 6$, $O=Bee$ would not hold under $u_3$. AC3 holds because $L$  is just one variable, so is minimal. Similarly, all other `input' variables are actual causes in $u_3$; e.g.\ $E=6$.
\end{example}

\begin{example}
For the extended model with annotated images from Example~\ref{example:extended-arthropod-causal-model}, consider the situation $u_u$ in which there is no annotation ($A=Unknown$) and we have spider but with 7 legs ($L=7$). If $L=7$, then $O=Unknown$ and therefore the verification will pass ($V=Pass$), because this does not indicate an inconsistency.

One actual cause for $V=Pass$ is the pair $(L=7,A=Unknown)$. AC1 holds trivially. For AC2, we need to change both $L$ and $A$ to also change the value of $V$ to $Fail$. If we change $L$ to anything else, $V$ will remain $Pass$ because $A=Unknown$, and similarly if we change $A$. It requires a mismatch in $A$ and $O$ other than $Unknown$ to produce $V=Fail$. AC3 holds because the pair of $L$ and $O$ is minimal. Similarly, the pair $(O=Unknown, A=Unknown)$ is an actual cause. However, the triple $(L=7,A=Unknown,O=Unknown)$ is only a sufficient cause, because it is not minimal (violates AC3): we do not require both $L=7$ and $O=Unknown$.
\end{example}

\subsection{Contrastive Causes for Counterfactual Explananda}

To define contrastive cause, we adopt and formalise \citeauthor{lipton1990contrastive}'s Difference Condition \cite{lipton1990contrastive}, which states that we should find causes that are different in the `history' of the two events.
We define the `history' as the situation $(M,\u)$ under which the events are evaluated; that is, $(M, u)$ for counterfactual why--questions, and both $(M, u)$ and $(M',\u')$ for bi-factual why--questions. 

The particular explanandum for which we want to define cause is no longer a single event $\phi$, but a pair of events $\langle \phi, \psi \rangle$, in which $\phi$ is the fact and $\psi$ is the foil. Similarly, causes will consist of two events instead of one, consistent with the difference condition.

Informally, a contrastive counterfactual cause of a pair of events $\langle \phi, \psi\rangle$ is a pair of \emph{partial} causes, such that the difference between the two causes is the minimum number of changes required to make $\psi$ become true.

\begin{definition}[Contrastive Counterfactual Cause]
\label{defn:contrastive-counterfactual-cause}
A \emph{pair} of events $\langle \Xex, \Xey\rangle$ is an \emph{contrastive counterfactual actual cause} (also just a \emph{counterfactual cause}) of $\langle \phi,\psi\rangle$ in situation  $(M, \u)$ if and only if the following conditions holds:

\begin{description}

  \item [CC1] $\Xex$ is a partial cause of $\phi$ under $(M,\u)$.

  \item [CC2] $(M,\u') \models \neg\psi$ --- the foil $\psi$ is not true.

  \item [CC3]  There is a non-empty set $\W \subseteq \V$ and a setting $\w$ of variables in $\W$ such that $\Xey$ is a partial cause of $\psi$ under situation $(M_{\Waw}, \u)$.

 Informally, this states that there is some hypothetical situation that did not happen, but is feasible in $M$; and that $\Xey$ is a partial cause of $\psi$ under this hypothetical situation.

 \item [CC4] $(\Xex \cap \Xey) = \emptyset$ --- that is, there are no common events. This is the \emph{difference condition}. 

 \item [CC5] $\X$ is maximal --- that is, no superset of $\X$ satisfies CC1-4.

\end{description}
\end{definition}

Similar to the \HP definition, we can define \emph{sufficient} contrastive cause by modifying CC1 and CC3 to refer to partial sufficient causes.

This definition is based on the \citeA{halpern2015modification} definition of actual cause, as conditions CC1-3 directly access partial causes, which are subsets of actual causes. However, the definition is modular with respect to the underlying definition of actual cause, such that a different definition of actual cause (using structural models), such as the original definition from \citeA{halpern2005causes-part-I}, could be substituted, and this would change the semantic interpretation of the above.

The reader may expect to see that CC2 had an additional statement that no part of the hypothetical cause of $\psi$ is true, such as $\bigwedge_{X_i=y_i \in \Xey}X_i\neq y_i$. However, this is implied by CC4, because all elements of $\Xex$ are true, and each element of $\Xey$ is different from its corresponding value in $\Xex$. Also note that condition CC3 implies that the foil $\psi$ is feasible in $M$. That is, it implies that $M \not\models \neg\psi$. For an infeasible event, there cannot be another situation such $\Xey$ is a cause of $\psi$,  therefore there can be no difference condition. This seems reasonable though: asking why an infeasible foil did not occur should not invoke a difference between the fact and foil, but a description that the foil is infeasible.

\begin{example}
\label{example:arthropod-counterfactual-case}
Consider the arthropod example from Example~\ref{example:arthropod-causal-model}, asking why an image was categorised as a Bee instead of a Fly. To answer the counterfactual why--question, we take the maximal intersection of two actual causes of $Output = Bee$ and the hypothetical cause of $Output = Fly$. In this case, the following pairs correspond to the possible contrastive causes:
\begin{displaymath}
\begin{array}{l}
  \langle S=\ $\yes$, S=\ $\no$\rangle \\
  \langle W=4, W=2\rangle.
\end{array}
\end{displaymath}
The image was classified as a Bee instead of a Fly because the image contains a stinger (S) and four wings (W), while for a Fly, it would have required no stinger and two wings.
The other actual causes of $\phi$ and $\psi$, such as $L=6$, are not contrastive causes because they do not satisfy the difference condition in CC4.
\end{example}

It is difficult to argue that a particular definition of contrastive cause is correct. However, we can at least argue that they abide by some commonly-accepted properties; specifically, the properties of an \emph{adequate} explanation defined by \citeA{hesslow1983explaining} (see Section~\ref{sec:lit:philosophical-foundations}). This states that, if the counterfactual causes hold in a different situation, so to would the counterfactual events. The following theorem captures this.

\begin{theorem}
	\label{thm:cac-consistent}
	If $\mathcal{C}$ comprises \emph{all} counterfactual contrastive actual causes of $\langle\phi,\psi\rangle$ under situation $(M, \u)$, then for any maximal-consistent subset\footnote{We abuse notation slightly here: $\Xex$ is the conjunction of the first items of all of the subset; similarly $\Xey$ is the conjunction of the second items.} $\langle \Xex, \Xey\rangle \subseteq \mathcal{C}$:
	
	\begin{enumerate}[(a)]
		\item $(M, \u) \models [\Xay]\psi$; and
		\item $(M, \u) \models [\Xay]\neg\phi$.
	\end{enumerate}

We need to consider only the maximal-consistent subsets because the set of all contrastive causes could be inconsistent if there are multiple sufficient causes.
\end{theorem}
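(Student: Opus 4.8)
The plan is to prove part (a) in full and then obtain part (b) almost for free. Recall that for alternative explananda the paper assumes the fact $\phi$ and the foil $\psi$ are incompatible: they refer to the same variables and cannot hold together in any situation. This incompatibility is a property of the two events, not of the situation, so it survives any intervention. Consequently, once part (a) gives $(M, \u) \models [\Xay]\psi$, the incompatibility forces $(M, \u) \models [\Xay]\neg\phi$, which is exactly part (b). The whole weight of the argument therefore falls on part (a).

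For part (a), I would first unfold the definitions on the maximal-consistent subset $\langle \Xex, \Xey\rangle \subseteq \mathcal{C}$. By CAC3, each constituent contrastive cause $\langle X_i = x_i, X_i = y_i\rangle$ carries a non-empty witness $(\W[i], \w[i])$ such that the foil value $X_i = y_i$ is a partial cause of $\psi$ in the hypothetical situation $(M_{\W[i] \leftarrow \w[i]}, \u)$. Since a partial cause is a subset of an actual cause and every actual cause satisfies AC1 in its own situation, each of these hypothetical situations already satisfies $\psi$, and in each the corresponding foil value $y_i$ is realised. Because the subset is consistent, collecting the second components assigns a single foil value to every variable of $\X$, so the conjunction $\Xey$ is well-defined and the causal formula $[\Xay]\psi$ is even meaningful.

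The core step is then to show that the single explicit intervention $[\Xay]$ in the original model $M$ reproduces one of these $\psi$-situations. Here maximality (CAC5) and the completeness assumption that $\mathcal{C}$ contains all contrastive causes do the work: I would argue that no variable which is itself a partial cause of $\phi$ can be left outside $\X$ and still block $\psi$ once $\X$ is clamped to $\y$, because any such variable would (with a suitable witness) extend some $\X$ to a strictly larger set still satisfying CAC1--CAC4, contradicting its maximality. Variables that affect $\psi$ without causing $\phi$ must instead be absorbed into the background via the witnesses, so that clamping $\X$ to $\y$ drives the structural equations into a state satisfying $\psi$, giving $(M, \u) \models [\Xay]\psi$.

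The hard part will be the combination step, where the separate witnesses $(\W[i], \w[i])$ supplied by CAC3 must be reconciled into a single coherent intervention. These witnesses need not agree a priori, so I would have to show that restricting to a maximal-consistent subset makes the foil assignments mutually compatible, and that the union of witnesses can be taken consistently --- for instance by fixing overlapping witness variables to their factual values so the extra interventions are vacuous where they meet the actual situation --- so that one $[\Xay]$ in $M$ simultaneously simulates every $(M_{\W[i] \leftarrow \w[i]}, \u)$ as far as the truth of $\psi$ is concerned. Converting the maximality argument of the previous paragraph from a statement about a single $\X$ into one about the assembled conjunction, without accidentally reintroducing an inconsistency, is where the subtlety lies.
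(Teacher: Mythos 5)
Your overall strategy matches the paper's: part (b) is discharged by the incompatibility of $\phi$ and $\psi$ exactly as in the paper, and part (a) rests on maximality (CAC5) together with the assumption that $\mathcal{C}$ contains \emph{all} contrastive causes. Where you diverge is in the execution of part (a), and the divergence is the source of the difficulty you yourself flag: you try to establish $(M,\u)\models[\Xay]\psi$ constructively, by reconciling the separate CAC3 witnesses $(\W[i],\w[i])$ into a single coherent intervention that simultaneously simulates every hypothetical $\psi$-situation, and you leave that combination step open. The paper avoids the combination problem entirely by arguing by contradiction at the level of the already-assembled conjunction: assume $(M,\u)\not\models[\Xay]\psi$; since $\Xey$ consists of partial causes of $\psi$, there must be some further assignment $\Zaz$ with $(M,\u)\models[\Xay,\Zaz]\psi$; the (maximal) part of $\Zez$ that does not already hold in $(M,\u)$ then satisfies CAC3--CAC5, so by completeness of $\mathcal{C}$ it is one half of a contrastive cause in $\mathcal{C}$, and by maximal-consistency of $\langle\Xex,\Xey\rangle$ it must already be contained in $\Xey$ --- contradicting the assumption that extra changes were needed. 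On this route the individual witnesses are never intervened on or merged; they only serve to certify CAC3 for the hypothetical situations, so the mergeability question you identify as the hard part simply does not arise. If you want to close your version, I would recommend switching to this contrapositive form rather than proving the witnesses can be combined, since in general they need not be jointly realisable and the theorem does not require them to be.
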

\begin{proof}
	Consider part (a) first. We prove via contradiction. Assume that $(M, \u) \not\models [\Xay]\psi$. From CC3, $\Xey$ contains partial causes of $\psi$, so there must be a set of additional causes $\Zez$ such that $(M, \u) \models [\Xay,\Zaz]\psi$. This implies that there is some (maximal) subset $\Zez['] \subseteq \Zez$ such that $(M,\u) \not\models \Zez[']$, and is therefore not in $\Xex$. However, these  two implications  mean that CC3 and CC4 hold for $\Zez[']$. CC5 also holds because $\Zez[']$ is maximal. Therefore, $\Zez[']$ is (one half of) a contrastive cause for $\langle \phi, \psi\rangle$, and as such, must be part of $\mathcal{C}$. Because $\Xey$ is maximal, $\Zez[']$ must be in $\Xey$, so it is not possible that both $(M,\u) \not\models [\Xay]\psi$ and $(M, \u) \models [\Xay,\Zaz]\psi$ are true. This contradiction shows that part (a) holds. Part (b) holds directly because $\phi$ and $\psi$ are incompatible.
\end{proof}

\subsection{Contrastive Causes in Bi-factual Explananda}

For bi-factual explananda, the definition of `history' is different to that of counterfactual explananda, citing two different situations.
We define the `history' as the situations $(M,\u)$ of $\phi$ and $(M', \u')$ of $\psi$.  For the moment, we simplify this by assuming that the two causal models $M$ and $M'$ are the same; e.g.\ the same algorithm is executed with different inputs from the environment. We drop this assumption later.

\begin{definition}[Contrastive Bi-factual Cause --- Simple Case]
	\label{defn:bi-factual-cause-simple}
A pair of events $\langle \Xex, \Xey\rangle$ is a \emph{contrastive bi-factual actual cause} of $\langle\phi, \psi\rangle$ in their respective situations  $(M, \u)$ and $(M,\u')$ if:

\begin{description}

 \item [BC1] $\Xex$ is a partial cause of $\phi$ under $(M,\u)$. 
 
 \item [BC2] $\Xey$ is a partial cause of $\psi$ under $(M,\u')$.

 \item [BC3] $(\Xex) \cap (\Xey) = \emptyset$ --- that is, there are no common events. This is the \emph{difference condition}.

 \item [BC4] $\X$ is maximal --- that is, no superset of $\X$ satisfies BC1-3.

\end{description}

\end{definition}

\noindent
Note that BC1 implies $(M, \u) \models \Xex \land \phi$ (AC1) and similarly for BC2.

%\tm{In BC3 --- the difference condition, what if $X=y$ in $\Xey$ is actually a cause of $\phi$ that is not in $\Xex$.}

%\tm{No, not a problem. If that is the case and $\phi \not\equiv \psi$, there must be some non-empty set of variables Z in $\X$ (otherwise $\phi \equiv \psi$) that accounts for the difference. BC4 ensures that Z is in $\X$. In which case, Z is required as a partial cause of $\phi$ and $\psi$, and therefore X cannot be in $\X$ because BC1/2 refer to partial causes, which are subject of the \emph{minimum} set of variables (according to AC1).

%For the case that $\phi \equiv \psi$, it will refer to common causes, but this is exactly what we want. Why $\u \models \phi$ but $\u' \models \phi$: show the \emph{differences} that are common causes}

A \emph{sufficient} contrastive cause can be obtained by modifying BC1 and BC2 to refer to partial sufficient causes.

This definition is simpler than that of counterfactual explanation (compare CC3 with BC2), because both the fact and surrogate are actual events, whereas in counterfactual explananda, the foil is hypothetical.

\begin{example}
\label{example:arthropod-bi-factual-case}
Consider again the arthropod example from Example~\ref{example:arthropod-causal-model}, and the contrastive why--question for two images $B$ and $F$, in which $B$ was categorised as a Bee and $F$ a fly. The situations for these two cases are straightforward to extract from Table~\ref{tab:arthropod}, as are the causes. To answer the contrastive why--question, we take the maximal intersection actual causes of $Output = Bee$ and $Output = Fly$ under models $(M, \u_{B})$ and $(M, \u_{F})$ respectively, which is simply the same as in Example~\ref{example:arthropod-counterfactual-case}:
\begin{displaymath}
\begin{array}{ll}
  \langle S=\ $\yes$, S=\ $\no$\rangle & \\
  \langle W=4, W=2\rangle.
\end{array}
\end{displaymath}
Note that the difference condition is the same as is in the counterfactual case, however, in this case, there was no need to find a hypothetical situation for the foil.
\end{example}

\begin{theorem}
\label{thm:BC-consistent}
If $\mathcal{C}$ comprises \emph{all} counterfactual contrastive actual causes of
$\langle\phi,\psi\rangle$ under respective situations $(M, \u)$ and $(M,\u')$
then for any maximal-consistent subset $\langle \Xex, \Xey\rangle \subseteq \mathcal{C}$:

\begin{enumerate}[(a)]
  \item $(M, \u)  \models [\Xay]\psi$; and
  \item $(M, \u')  \models [\Xax]\phi$.
\end{enumerate}
\end{theorem}
\begin{proof}
The proofs for both parts are similar to the proof for counterfactual causes in Theorem~\ref{thm:cac-consistent}, except that we refer to BC2-4 instead of CC3-5.
\end{proof}

Now we return to the case in which the two models may be different. For this, we define a \emph{restricted  cause} of $\phi$ under situation $(M, \u)$, where $M = (\S,\F)$, as a pair  $(\F', \Xex')$, in which $\Xex'$ is a sufficient cause of $\phi$, and $\F' \subseteq \F$ is the smallest subset of $\F$ required to derive $\phi$. That is, for all $\u'$, $(M,\u') \models \phi$ iff $(M^{\F'},\u') \models \phi$, where $M^{\F'}=(S,\F')$, and therefore all functions $F \setminus F'$ do not influence $\phi$ in any situation.
A \emph{partial restricted cause} is simply $(\Fx, \Xex)$ such that $\Fx \subseteq \F'$ and $\Xex \subseteq \Xex'$.

\begin{definition}[Contrastive Bi-factual Cause --- General Case]
A pair $\langle (\Fx, \Xex), (\Fy, \Yey)\rangle$ is a \emph{contrastive bi-factual actual cause} of $\langle\phi, \psi\rangle$ in their respective situations  $(M, \u)$ and $(M', \u')$ if and only if the following conditions hold:

\begin{description}

  \item [BC1\G] $(\Fx, \Xex)$ is a partial restricted cause of $\phi$ under situation $(M, \u)$.

  \item [BC2\G] $(\Fy, \Yey)$ is a partial restricted cause of $\psi$ under situation $(M', \u')$.

 \item [BC3\G] $\Fx \cap \Fy = \emptyset$ and $(\Xex) \cap (\Yey) = \emptyset$ --- that is, there are no common functions or pairs of events. This is the \emph{difference condition}. 

 \item [BC4\G] $(\Fx, \Xex, \Fy, \Yey, \X \cap \Y)$ is maximal.

  That is, there is no tuple ($\Fx['], \Xex['], \Fy['],  \Yey['], \X' \cap \Y') \neq (\Fx, \Xex, \Fy, \Yey, \X \cap \Y)$ satisfying BC1\G, BC2\G, and BC4\G such that $\Fx \subseteq \Fx[']$, $\Fy \subseteq \Fy[']$, $\Xex \subseteq \Xex[']$, $\Yey \subseteq \Yey[']$, and $\X \cap \Y \subseteq \X' \cap \Y'$.

\end{description}
\end{definition}

Note two differences between this and the less general version. First, the definition refers to differences in the functions of the two models. Second, the sets of variables that are referred to are no longer \emph{shared} between the two models. That is, in the less general definition, the contrastive cause $\langle \Xex, \Xey \rangle$ both pointed to $\X$. However, the sets of variables can be different between the two models $M$ and $M'$, so it is possible that some variable in $\Y$ does not exist in model $M$, but is a cause of $\psi$ in $(M',\u')$. Note that BC4\G also states that $\X \cap \Y$ is maximal, meaning that the two parts of a contrastive cause can only cite different variables if at least one of the models does not contain that variable.

In the case where $M=M'$, this definition is the same as Definition~\ref{defn:bi-factual-cause-simple} because $\Fx=\Fy=\emptyset$ and $\X=\Y$.

\begin{example}
\label{example:extended-arthropod-general-case-contrastive}
Consider now the combination of Examples~\ref{example:arthropod-causal-model} (the simple arthropod classification example) and \ref{example:extended-arthropod-causal-model} (the extended example in which images may come annotated). Let $M$ be the model without the extension and $M'$ be the extended model. Asking why $(M,\u) \models O=Unknown$ and $(M',\u') \models O=Bee$, in which $\u$ and $\u'$ both correspond to features of a Bee but $L=5$ (five legs) and $A=Bee$ in $\u'$, a contrastive cause would be:
\begin{displaymath}
\begin{array}{ll}
   \langle (F_O = f, \emptyset), (F_O = f', A=Bee)\rangle,
\end{array}
\end{displaymath}
in which $f$ and $f'$ refer to the before and after functions for $F_O$ in $M$ and $M'$ respectively, and are hopefully clear from the description. Here, the contrast cites the change in functions and the additional cause $A=Bee$ as the difference condition.
\end{example}

\begin{example}
\label{ex:abstract}
The more general definition is also useful for reasoning about situations in which the fact and surrogate are the same event. That is, ``{Why $(M,\u) \models \phi$ but $(M', \u') \models  \phi$}''? This is useful for situations in which an observer wants to understand why the event $\phi$ still occurs despite the model changing. As an example, consider the two simple structural models in Figure~\ref{fig:abstract-example}, with exogenous variables $U_1$ and $U_2$, and endogenous variables $P$, $Q$, $R$, and $S$. $S$  depends on all four variables in $M$, but there is no variable $Q$ in model $M'$. 

\begin{figure}[!ht]
\begin{subfigure}[b]{0.45\textwidth}
\centering

\begin{tikzpicture}
\begin{scope}[node distance=1.2cm]

\node[state] (S)   {$S$};
\node[state] (R)  [above=of S]  {$R$};
\node[state] (P) [above left=of R]  {$P$};
\node[state] (Q) [above right=of R]  {$Q$};
\node[state] (U1) [above=of P]  {$U_1$};
\node[state] (U2) [above=of Q]  {$U_2$};

\path[every node/.style={sloped,anchor=south,auto=false,font=\tiny}]

(R) edge[]   node[sloped=true]  {$S= 1 - R$}   (S)
(U1)  edge[]   node[]  {$$}   (P)
(U2)  edge[]   node[]  {$~$}   (Q)
(P)  edge[]   node[sloped=false,shift={(+.30in,0in)}]  {$R = max(P,Q)$}   (R)
(Q)  edge[]   node[]  {}   (R)
;
\end{scope}

\end{tikzpicture}
\caption{Structural Model $M$}
\label{fig:abstract-before}
\end{subfigure}
\begin{subfigure}[b]{0.4\textwidth}
\centering
\begin{tikzpicture}
\begin{scope}[node distance=1.2cm]

\node[state] (S)   {$S$};
\node[state] (R)  [above=of S]  {$R$};
\node[state] (P) [above left=of R]  {$P$};
%\node[state] (Q) [above right=of R]  {$Q$};
\node[state] (U1) [above=of P]  {$U_1$};
%\node[state] (U2) [above=of Q]  {$U_2$};

\path[every node/.style={sloped,anchor=south,auto=false,font=\tiny}]

(R) edge[]   node[sloped=true]  {$S= 1 - R$}   (S)
(U1)  edge[]   node[]  {$$}   (P)
%(U2)  edge[]   node[]  {$~$}   (Q)
(P)  edge[]   node[sloped=true]  {$R = P$}   (R)
%(Q)  edge[]   node[]  {}   (R)
;
\end{scope}
\end{tikzpicture}
\caption{Structural Model $M'$}
\end{subfigure}
\caption{Structural Models for for Example~\ref{ex:abstract}}
\label{fig:abstract-example}
\end{figure}

For the contrast between $(M, \u) \models S=0$ and $(M',\u') \models S=0$, in which $\u$ leads to $P=1$, $Q=0$, $R=1$, and $S=0$ and $\u'$ leads to $P=1$, $R=1$, $S=0$, the contrastive cause would be cited as:
\begin{displaymath}
\begin{array}{ll}
 &  \langle (F_R = \max(P, Q), \emptyset), (F_R = P, \emptyset)\rangle.
\end{array}
\end{displaymath}
That is, the difference is in the function $F_R$, not in any of the variables nor the output.
\end{example}

To explore some properties of this, we introduce notation that allows us to reason about changes in models at a meta level. Recall that a structural model $M = (\S, \F)$ consists of a set of signatures $\S$ and a set of functions $\F$. We define the \emph{override} of a set of functions $\F$ by another set $\F'$, denoted $\F \override \F'$ being the same as $\F$, except replacing $F_X$ with $F'_X$ for all variables $X$ such that  $F'_X \in \F$. %We define a similar override between situations, which are themselves functions (from variables to values), such that $s \override s'$ defines the overriding of $s$ with $s'$. 
The notation $M \override \F'$ represents the overriding of the functions in $M$ with $\F'$.

\begin{theorem}
\label{thm:BCG-consistent-extended}
If $\mathcal{C}$ comprises \emph{all} counterfactual contrastive actual causes of
$\langle\phi,\psi\rangle$ under respective situations $(M, \u)$ and $(M,\u')$
then for any maximal-consistent subset $\langle (\Fx,\Xex), (\Fy,\Yey)\rangle \subseteq \mathcal{C}$, the following hold:

\begin{enumerate}[(a)]
  \item $(M \override \Fy, s) \models [\Yay]\psi$ 
  \item $(M' \override \Fx, s) \models [\Xax]\phi$.
\end{enumerate}
\end{theorem}
\begin{proof}
The proof for this is an extension of the proof for Theorem~\ref{thm:BC-consistent}. The only case that requires attention is when variables are added/removed to/from the model. In this case, the model $M \override \Fy$ may contain the function $F_X$, which is in $M$ but not $M'$. However, if the variable $X$ is not in $M'$, then $\psi$ cannot refer to it, so its is effectively redundant in $M \override \Fy$.
\end{proof}

\subsection{Presuppositions}

As noted previously, it is difficult to argue that a particular definition of contrastive cause is correct, but we can show our definition behaves according to some commonly-accepted properties. In this section, we show that our definition is consistent with the the idea of contrastive explanation as \emph{presupposed explanation} \cite{lipton1990contrastive}.

\citeA{lipton1990contrastive} notes that to give an explanation for ``{Why \P rather than \Q?}'' is to give a to ``{give a certain type of explanation of \P, \emph{given} \P or \Q, and an explanation that succeeds with the presupposition will not generally succeed without it.''} \cite[p.\ 251]{lipton1990contrastive} (emphasis original). Thus, this states that if we assume that \P and \Q are the only two possible outcomes, and are mutually exclusive, then the actual cause of \P under this assumption will refer to exactly those variables in the difference condition. 

Formally, the assumption is $M \models \phi \xor \psi$ --- that is, under all models of $M$, either $\phi$ is true or $\psi$ is true, and not both. Note the absence of a situation $\u$. Thus, we can re-phrase a counterfactual explanandum as:
\begin{equation}
\label{defn:counterfactual-question-given-P-or-Q}
\textrm{Assuming $M \models (\phi \xor \psi)$, why $(M, \u) \models \phi$~?}
\end{equation}
As a shorthand, we use $M^{\phi\xor\psi}$ to refer to the sub-model of $M$ in which $\phi\xor\psi$ is always true. That is, the functions in $\F$ are restricted such that assignments to all variables always conform to $\phi\xor\psi$.
% If we consider that a model consists of sets of variables and their valuations, such that $M = \{(X_i, x_i) \in (\V, \R(X_i)) \mid x_i = \F_X(\vec{U}, \vec{\V-\{X\}})\}$, then $M^{\phi} = \{ X_i \in \V \mid \phi \} \cap M$. Note that this is not the same as the notation $M_{\Xax}$, which refers to a new model in which only the variables in $\X$ change, while other do not --- even those that cause $X$ are not affected. Instead, the notion $M^{\phi}$ refers to a model that is a restricted version of $M$ in which $\phi$ holds in all possible situations.

The set of events $\Xex$ is a \emph{presupposed contrastive cause} of $\phi$ under situation $(M, \u)$ and assumption $M \models \phi \xor \psi$ if and only if the following condition holds:

\begin{description}

\item [PAC] $\Xex$ is an actual cause of $\phi$ under the situation $(M^{\phi \xor \psi}, \u)$.

\end{description}

That is, if we assume that $\phi \xor \psi$ always holds in a structural model, then an \emph{actual} cause of $\phi$ in that model under situation $\u$ is sufficient to identify the different condition.
Note here that the cause is not contrastive because it is not a pair -- it just refers to the variables in $\X$ and their values in $\u$. However, this is enough for us to propose the following theorem.

\begin{theorem}
\label{thm:presupposition-counterfactual}
$\Xex$ is an actual cause of $\phi$ under situation $(M, \u)$ assuming $M \models \phi \xor \psi$
 \emph{if and only if}
 $\langle \Xex, \Xey\rangle$ is a counterfactual contrastive cause of $\langle\phi,\psi\rangle$ under situation $(M, \u)$ for some $\y$.
\end{theorem}
\begin{proof}
This theorem is effectively stating that if AC1-3 hold assuming $\phi \xor \psi$, then CC1-5 hold for some $\y$, and vice-versa. 

The left-to-right case: For CC1, if $\Xex$ is an actual cause under a restricted model $M^{\phi \xor \psi}$, then model $M$ must admit $\Xex$ as (at least) a partial cause for $\phi$. For CC2, $(M, \u) \models \neg\psi$ must hold because $\phi$ holds according to AC1, and $\phi$ and $\psi$ are mutually exclusive. For the remainder, we need to show that a $\y$ exists such that CC3-5 hold. From AC2, we know that there exists some counterfactual situation in which $\phi$ would not have occurred under $M^{\phi\xor\psi}$. In such a situation, it must be that $\psi$ occurred, so all such situations would be candidate values for $\y$. This implies CC3. In addition, the values in $\y$ must make $\psi$ true, and therefore must be different from the values in $\x$, so CC4 holds. Finally, we prove CC5 (maximality) by contradiction. Assume that $\X$ is not maximal. This implies there exists some additional variables $\Y$ not in $\X$ that must change to make $\psi$ hold under $(M,\u)$. However, this would also require these variables to change under $M^{\phi \xor \psi}$, which would mean that $\Xex$ is not a complete actual cause of $\phi$, contradicting the definition of PAC. Therefore, $\X$ must be maximal.

For the right-to-left case, AC1 is implied trivially by CC1: $\Xex$ and $\phi$ hold under $M^{\phi\xor\psi}$, and expanding the model without changing the structural equations themselves will not change the $\phi$. AC2 is implied by CC3: if there is an alternative situation $\u'$ under $M$ such that $\psi$ holds, then that same situation must exist in $M^{\phi\xor\psi}$ because $M^{\phi\xor\psi}$ does not exclude situations in which $\psi$ holds, so any such situation gives us the setting for $\x'$ that is required for the counterfactual situation in AC2.

For AC3, we need to show that the partial cause $\Xex$ under $M$ is minimal under $M^{\phi\xor\psi}$. We prove this by contradiction. Assume that $\Xex$ is not minimal under $M^{\phi\xor\psi}$. This means that there is some variable $W$ that has no effect on $\phi$ under $M^{\phi \xor \psi}$, but is cited as a contrastive cause. Therefore, some part of the contrastive cause cites the events $(\Wew,\W=\z)$ for some $\w,\z$, and that $\Wew$ is a partial cause of $\phi$ under $(M,\u)$ and $\W=\z$ is a partial cause of $\psi$ under the hypothetical situation in CC3. However, $\W=\z$ must then be a counterfactual case for $\W$ that satisfies AC2 under $M^{\phi\xor\psi}$, meaning that it affects $\phi$. This is a contradiction for our assumption that $\Xex$ is not minimal.
\end{proof}

\begin{theorem}
\label{thm:presupposition-bi-factual}
$\Xex$ is an actual cause of $\phi$ under situation $(M, \u)$ assuming $M \models \phi \xor \psi$
and
$\Xey$ is an actual cause of $\psi$ under situation $(M', \u')$ assuming $M \models \phi \xor \psi$
 \emph{if and only if}
 $\langle \Xex, \Xey\rangle$ is a  contrastive bi-factual cause of $\langle\phi, \psi\rangle$ under situations $(M, \u)$ and $(M', \u)$.
\end{theorem}
\begin{proof}
The proof is a straightforward extension of the proof from Theorem~\ref{thm:presupposition-counterfactual}. In brief: the $\y$ referred to in Theorem~\ref{thm:presupposition-counterfactual} is from the surrogate. The two cases on the left of the \emph{if and only if} are symmetric, so the proof above extends to this.
\end{proof}
\section{Constrastive Explanation}
\label{sec:contrastive-explanation}

Now that we have defined contrastive cause, we can define contrastive explanation. This is a simple extension to the existing definition of \citeA{halpern2005causes-part-I}'s definition, but using contrastive causes instead of standard actual causes. 

\subsection{Non-Contrastive Explanation}

In Part~II \cite{halpern2005causes-part-II} of their paper, \citeauthor{halpern2005causes-part-II}  build on the definition of causation from Part~I to provide a definition of \emph{causal explanation}. 
They define the difference between causality and explanation as such: causality is the problem of determining which events cause another, whereas explanation is the problem of providing the necessary information in order to establish causation. Thus, an explanation is a fact that, if found to be true, would be a cause for an explanandum, but is initially unknown. As such, they consider that explanation should be relative to an \emph{epistemic state}. This is in fact a definition of contrastive explanation using \emph{epistemic relevance} \cite{slugoski1993attribution}.

Informally, an explanation is defined in their framework as follows. Consider an agent with an epistemic state $\K$, who seeks an explanation of event $\phi$. A good explanation should: (a) provide more information than is contained in $\K$; (b) update $\K$ in such a way that the person can now understand the cause of $\phi$; and (c) it \emph{may} be a requirement that $\phi$ is true or probable\footnote{In the case of an explainer and explainee, we may say that it is `believed' by the explainer.}.

\citeA{halpern2005causes-part-II} formalise this by defining $\K$ as a set of contexts, which represents the set of `possible worlds' that the questioning agent considers possible. Therefore, an agent believes $\phi$ if and only if $(M,\u) \models \phi$ holds for every $\u$ in its epistemic state $\K$. A complete explanation effectively eliminates possible worlds of the explainee so that they can now determine the cause. Formally, an event $\Xex$ is an \emph{explanation} of event $\phi$ relative to a set of contexts $\K$ if the following hold:

\begin{description}
	
	\item [EX1] $(M, \u) \models \phi$ for each $\u \in \K$ --- that is, the agent believes that $\phi$.
	
	\item [EX2] $\Xex$ is a \emph{sufficient cause} of $\phi$ for all situations $(M, \u)$ where $u \in \K$ such that $(M, \u) \models \Xex$.
	
	\item [EX3] $\X$ is minimal --- no subset of $\X$ satisfies EX2.
	
	\item [EX4] $(M, \u) \models \neg (\Xex)$ for some $\u \in \K$ and $(M, \u') \models \Xex$ for some (other) $\u' \in \K$ --- that is, before the explanation, the agent is initially uncertain whether the information contained in the explanation is true or not, meaning the explanation meaningfully provides information.
	
\end{description}

\begin{example}
\label{example:arthropod-non-contrastive-explanation}
Consider the basic arthropod example (Example~\ref{example:arthropod-causal-model}), in which $O=Unknown$ due to a spider with only 7 legs. The agent knows that the image has 8 eyes and no stinger, but is uncertain of the remaining variables. The explanation for why $O=Unknown$ is just $L=7$ (7 legs). This is a sufficient cause for $O=Unknown$, is minimal, and the agent does not know it previously.

For the extended arthropod example, consider the same case, but with $V=Pass$ (known to the agent) and $A=Unknown$ (unknown to the agent). An explanation for why $V=Pass$ would cite the pair $(O=Unknown,A=Unknown)$. The agent would need to know both parts of information to determine the cause. Another explanation would be $(L=7,A=Unknown)$, as knowing $L=7$ allows the agent to determine $O=Unknown$. If the agent already knows $O=Unknown$, then the explanation is a singleton again; either $L=7$ or $O=Unknown$ will suffice.
\end{example}

\subsection{Contrastive Counterfactual Explanation}

We extend the above definition to contrastive counterfactual causes. As with the Halpern and Pearl definition, it is defined relative to an epistemic state and model, however, as it describes a contrastive cause, the explanation is a pair.

\begin{definition}[Contrastive counterfactual Explanation]
	\label{defn:counterfactual-explanation-basic}
Given a structural model $M$, a pair of events $\langle \Xex, \Xey\rangle$ is a \emph{contrastive counterfactual explanation} of $\langle\phi,\psi\rangle$ relative to $\K$ if and only if the following hold:

\begin{description}
 
 \item [CE1] $(M, \u) \models \phi \land \neg\psi$ for each $\u \in \K$ --- that is, the agent accepts that $\phi$ and that $\neg\psi$.

 \item [CE2] $\langle \Xex, \Xey\rangle$ is a sufficient counterfactual cause for $\langle\phi,\psi\rangle$, for each $\u\in\K$ such that $(M,\u) \models \Xex$.

 \item [CE3] $\X$ is minimal --- no subset of $\X$ satisfies CE2.

 \item [CE4] $(M, \u) \models \neg (\Xex)$ for some $\u \in \K$ and $(M, \u') \models \Xex$ for some (other) $\u' \in \K$; and for some $\W=\w$ such that $\w\neq\x$, $(M_{\Waw}, \u) \models \Xey$ for some $\u \in \K$ and $(M_{\Waw}, \u')\models \neg(\Xey)$ for some (other) $\u' \in \K$ -- that is, agent is initially uncertain whether the explanation is true or not, meaning the explanation provides meaningful information.

\end{description}
\end{definition}

\begin{example}
Consider the same two cases from Example~\ref{example:arthropod-non-contrastive-explanation}. An explanation for why $O=Unknown$ rather than $O=Spider$ would cite the pair $\langle L=7, L=8\rangle$: the image has 7 legs but requires 8 to be a spider. We can already see that this is more informative than the non-contrastive cause, because we are given the counterfactual case of what \emph{should have been} to make $O=Spider$.

For the extended case, an explanation for why $V=Pass$ rather than $V=Fail$ (the only possible foil) is the pair of tuples $\langle (O=Unknown, A=Unknown), (O=X,A=X)\rangle$, where $X$ is one of $Spider$, $Beetle$, etc., or the pair of tuple $\langle (L=7, A=Unknown), (L=8,A=Spider)\rangle$, and similarly for other types. Again, if the agent already knows $A$ or $L$, then pairs of singletons suffice. 
\end{example}

Definition~\ref{defn:counterfactual-explanation-basic} defines a counterfactual  contrastive explanation as finding part of a counterfactual cause that satisfies the conditions CE1-4. However, we can think of this in different way: finding partial explanations for each of $\phi$ and $\psi$ and taking the difference between these, where we define a \emph{partial explanation} as just a subset of an explanation.

\begin{definition}[Contrastive Counterfactual Explanation -- Alternative Definition]
	Given a structural model $M$, a pair of events $\langle \Xex, \Xey\rangle$ is a \emph{contrastive counterfactual explanation} of $\langle\phi,\psi\rangle$ relative to $\K$ if and only if the following hold:

	\begin{description}
		
		\item [CE1$'$] $\Xex$ is a partial explanation of $\phi$ in $(M,\u)$.
		
		\item [CE2$'$] There is a non-empty set $\W \subseteq \V$ and a setting $\w$ of variables in $\W$ such that $\Xey$ is a partial explanation of $\psi$ under situation $(M_{\Waw}, \u)$.
		
		\item [CE3$'$] $(\Xex) \cap (\Xey) = \emptyset$ --- the difference condition.
		
		\item [CE4$'$] $\X$ is maximal --- that is, there is no superset of $\X$ that satisfies CE1$'$-3$'$.
		
	\end{description}
\end{definition}

\begin{theorem}
	\label{thm:CE-counterfactual-consistent}
	CE1-4 \emph{iff} CE1-4$'$ --- that is, the two definitions of contrastive counterfactual explanation are equivalent.
\end{theorem}
\begin{proof}
	Left-to-right case: 
	(CE1$'$)  This holds from CE2-4. If $\langle \Xex, \Xey\rangle$ is a sufficient counterfactual cause for $\langle\phi, \psi\rangle$ that is minimal and uncertain, then $\Xex$ must be a partial explanation of $\phi$ under $(M,\u$); that is, the agent believes $\phi$, some superset of $\Xex$ is an actual cause of $\phi$, and the agent is uncertain about some of that superset. 
	(CE2$'$)  The same argument holds, except that $\Xey$ is true under the hypothetical situation $(M_{\Waw},\u)$ from CE2; and therefore, this hypothetical situation is a witness for CE2$'$. 
	(CE3$'$) 
	The difference condition holds because this is a requirement of CE2. 
	(CE4$'$) holds from the maximality condition in CE2.
	This establishes the left-to-right case.
	
	Right-to-left case:
	(CE1) This holds directly from CE1$'$, because the acceptance of $\phi$ in $\K$ is a condition of an explanation under the original \citeauthor{halpern2005causes-part-II} definition, and $\psi$ must be false whenever $\phi$ is true. 
	(CE2) If $\Xex$ and $\Xey$ are partial explanations of $\phi$ under $(M,\u)$ and $\psi$ under $(M_{\Waw},\u)$ respectively, then they must be partial causes too. If their intersection is empty and $\X$ is maximal, then this defines a sufficient cause, so CE2 holds.
	(CE3) 
	We prove this via contradiction. Assume $\X$ is not minimal. This implies that there is some strict  superset $\Y \supset \X$ that satisfies CE1$'$-4$'$ and CE2. However, if this were the case, then CE4$'$ would not hold: $\X$ would not be maximal over CE1$'$-3$'$, which is a contradiction, so our assumption is false.
	(CE4)
	If $\Xex$ and $\Xey$ are partial explanations under $(M,\u)$ and some hypothetical counterfactual case $(M_{\Waw},\u)$, then the agent must be uncertain about $\Xex$ in $(M,\u)$ and $\Xey$ in $(M_{\Waw},\u)$.
	This establishes the right-to-left case, and the theorem holds.
\end{proof}

\subsection{Bi-factual Contrastive Explanation}

For the bi-factual case, an explanation is similar, however, it refers to two epistemic states, $\K$ and $\K'$, in which $\K$ models the uncertainty of the individual in the situation $(M,\u)$ and $\K'$ models the uncertainty in $(M',\u')$.

\begin{definition}[Contrastive Bi-factual Explanation -- Simple Case]
\label{defn:bi-factual-explanation-basic}
Given a structural model $M$, a pair $\langle \Xex, \Xey\rangle$ is a \emph{contrastive bi-factual explanation} of $\langle\phi,\psi\rangle$ relative to two epistemic states $\K$ and $\K'$ if and only if the following hold:

\begin{description}
 
 \item [BE1] $(M, \u) \models \phi$ for each $\u \in \K$ and  $(M', \u') \models \psi$ for each $\u' \in \K'$ --- that is, the agent accepts that $\phi$ under $(M,\u)$ and that $\neg\psi$ under ($M',\u')$.

 \item [BE2] for each $\u\in\K$ such that $(M,\u) \models \Xex$ and $\u'\in\K'$ such that  $(M',\u') \models \Xey$, $\langle \Xex, \Xey\rangle$ is a sufficient bi-factual cause for $\langle \phi, \psi\rangle$ under $(M,\u)$ and $(M',\u')$.

 \item [BE3] $\X$ is minimal ---  that is, no superset of $\X$ satisfies BE2.

 \item [BE4] $(M, \u) \models \neg (\Xex)$ for some $\u \in \K$ and $(M, \u') \models \Xex $ for some (other) $\u' \in \K$; and  $(M, \u) \models \Xey$ for some $\u \in \K'$ and $(M, \u') \models \neg(\Xey)$ for some (other) $\u' \in \K'$ -- that is, the agent is initially uncertain whether the explanation is true or not, meaning the explanation provides  meaningful information.

\end{description}
\end{definition}

This is similar to the definition of CE, except that the rules refer to an actual situation $\u'$, rather than the hypothetical situation implied by CE2.
The more general case in which there are differences between the models is straightforward projection of this.

\begin{example}
Consider the case of the 7-legged spider (situation $\u_7$), and a second case of a `proper' spider ($\u_8$). The agent is uncertain of all variables and asks why $O=Unknown$ under $(M,\u_7)$ and $O=Spider$ under $(M,\u_8)$. The explanation is as before: $\langle L=7, L=8\rangle$. Note here that the agent already knows that $L=8$, because it knows that $O=Spider$, so can determine the values of the input variables. However, we still cite this in the explanation because it contrasts $L=7$. The extended case is similar to the counterfactual explanation.
\end{example}

Definition~\ref{defn:bi-factual-explanation-basic} defines bi-factual explanation as finding part of a counterfactual cause that satisfies the conditions BE1-4. However, we can think of bi-factual explanation in different way: finding partial explanations for each of $\phi$ and $\psi$ and taking the difference between these, where we define \emph{partial explanation} as just subsets of explanations.

\begin{definition}[Contrastive Bi-factual Explanation -- Simple Case, Counterfactual Definition]
Given a structural model $M$, a pair of events $\langle \Xex, \Xey\rangle$ is a \emph{contrastive bi-factual explanation} of $\langle\phi,\psi\rangle$ relative to two epistemic states $\K$ and $\K'$ if and only if the following hold:

\begin{description}
 
 \item [BE1$'$] $\Xex$ is a partial explanation of $\phi$ in $(M,\u)$.

 \item [BE2$'$] $\Xey$ is a partial explanation of $\psi$ in $(M,\u')$.

 \item [BE3$'$] $(\Xex) \cap (\Xey) = \emptyset$ --- the difference condition.

 \item [BE4$'$] $\X$ is maximal --- that is, there is no superset of $\X$ that satisfies BE1$'$-3$'$.

\end{description}
\end{definition}

\begin{theorem}
BE1-4 iff BE1$'$-4$'$ --- that is, the two definitions of contrastive bi-factual explanation are equivalent.
\end{theorem}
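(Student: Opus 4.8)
The plan is to follow the same two-direction, condition-by-condition strategy used in the proof of Theorem~\ref{thm:AEX-alternative-consistent}, while exploiting the one structural simplification of the congruent setting: because the surrogate $\psi$ is an \emph{actual} event holding in the genuine situation $(M,\u')$, there is no longer any need for the hypothetical witness $(M_{\Waw},\u)$ that appeared in AEX2 and AEX2$'$. Every place where the alternative proof had to manufacture and carry a counterfactual situation can therefore be replaced by a direct appeal to $(M,\u')$, which shortens the argument considerably. As in Theorem~\ref{thm:AEX-alternative-consistent}, I would establish each primed condition from the unprimed ones and vice versa, leaning throughout on the fact that a partial explanation is in particular a partial cause together with the acceptance and uncertainty requirements of the \citeauthor{halpern2005causes-part-II} definition.

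For the left-to-right direction (CEX1-4 $\Rightarrow$ CEX1$'$-4$'$): I would derive CEX1$'$ from CEX2-CEX4, observing that if $\langle \Xex, \Xey\rangle$ is a minimal, epistemically uncertain sufficient congruent cause, then $\Xex$ meets each ingredient of a partial explanation of $\phi$ in $(M,\u)$ --- acceptance of $\phi$ (from CEX1), extendability to an actual cause (from sufficiency in CEX2), and uncertainty (from CEX4). CEX2$'$ is symmetric, using $(M,\u')$ directly rather than a hypothetical situation. CEX3$'$ is immediate because the difference condition is already demanded inside the sufficient congruent cause of CEX2 (it is precisely CCC3). CEX4$'$ follows from the maximality built into the congruent-cause condition (CCC4).

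For the right-to-left direction (CEX1$'$-4$'$ $\Rightarrow$ CEX1-4): CEX1 follows from the acceptance clauses of CEX1$'$ and CEX2$'$. CEX2 follows because partial explanations are partial causes, so $\Xex$ and $\Xey$ with empty intersection (CEX3$'$) and maximal $\X$ (CEX4$'$) assemble into a sufficient congruent cause satisfying CCC1-CCC4. CEX4 is read off directly from the uncertainty inherent in the two partial explanations. The one condition requiring real work is CEX3 (minimality), which I would prove by contradiction exactly as AEX3 was: if $\X$ were not minimal, a strict superset would still satisfy CEX2, which would contradict the maximality asserted by CEX4$'$.

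The step I expect to be the main obstacle is precisely this minimality-versus-maximality interplay linking CEX3 and CEX4$'$. This is where the two definitions trade a ``smallest sufficient explanation'' view against a ``largest common difference'' view, and the contradiction argument must carefully track that enlarging $\X$ under one reading corresponds to the failure of minimality under the other, while keeping the two projections $\Xex$ and $\Xey$ and their disjointness (CEX3$'$ and CCC3) consistent throughout the resizing. Everything else is a faithful specialisation of the proof of Theorem~\ref{thm:AEX-alternative-consistent}, made cleaner by the disappearance of the hypothetical situation, so the body of the proof should read as a short adaptation of that earlier argument.
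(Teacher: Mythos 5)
Your proposal is correct and takes essentially the same route as the paper, which simply states that the proof mirrors that of Theorem~\ref{thm:AEX-alternative-consistent} but is simpler because only factual situations are involved. Your condition-by-condition elaboration, including replacing the hypothetical witness $(M_{\Waw},\u)$ with the actual situation $(M,\u')$ and handling CEX3 via the same minimality-versus-maximality contradiction, is a faithful expansion of exactly that argument.
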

\begin{proof}
	The proof for this is similar to the proof for Theorem~\ref{thm:CE-counterfactual-consistent}, except simpler because we deal only with factual situations and no hypothetical situations.
\end{proof}

\subsection{Non-Contrastive General Explanation}

The definitions provided in the previous section merely allow explanations in which the causal model is known to the explainee agent, but the agent is uncertain which context is the real context. A more general definition allows for explanations in which the agent is also uncertain about the causal model, and thus the explanation is about both the causal model and the context.

\citeA{halpern2005causes-part-II} present an extended definition of explanation based on this idea.
In this case, an epistemic state $\K$ is now a set of situations $(M, \u)$ instead of a set of just contexts. A general explanation is of the form  $(\alpha, \Xex)$, in which $\alpha$ is a causal formula. The first component restricts the set of models, while the second restricts the set of contexts.

A formula-event pair $(\alpha, \Xex)$ is an explanation of event $\phi$ relative to a set of situations $\K$ if:

\begin{description}
	
	\item [EX1] $(M, \u) \models \phi$ for each $(M,\u) \in \K$ (unchanged).
	
	\item [EX2] for all situations $(M, \u)$ such that $(M, \u) \models \Xex$ and $M \models \alpha$ ($\alpha$ is valid in all contexts consistent with $M$), $\Xex$ is a \emph{sufficient cause} of $\phi$.
	
	\item [EX3] $(\alpha,\Xex)$ is minimal --- there is no pair $(\alpha',\Xex[']) \neq (\alpha,\Xex)$ satisfying EX2 such that $\{M'' \in M(\K) \mid M'' \models \alpha'\} \supseteq \{M'' \in M(\K) \mid M'' \models \alpha\}$ and $\Xex['] \subseteq \Xex$, where $M(\K) = \{M \mid (M,\u) \in \K \textrm{ for some } \u\}$.
	
	\item [EX4] $(M, \u) \models \neg (\Xex)$ for some $(M, \u) \in \K$ and $(M', \u) \models \Xex$ for some (other) $(M', \u') \in \K$ --- that is, the agent is uncertain, as before.
	
\end{description}

In this definition, the two parts of the explanation play different roles. The formula $\alpha$ characterises the part of the model that is unknown to the agent to just enough information to understand the causes of $\phi$; while $\phi$ is an explanation in that restricted set of models.

\begin{example}
	As a simple example, consider an agent who does not know how the arthropod system works at all, and confronted with $O=Spider$, they ask why. An explanation is the pair:
	\[
	(L=8 \land S=\yes \land E=8 \land C=\yes \land W=0 \Rightarrow O=Spider, L=8)
	\]
	plus one for all other variables other than $L$. 
	The formula informs the explainee what the properties of a spider are, but does not need to define the entire model nor even the properties of other arthropods.
	
	However, the $\alpha$ part of the explanation can be arbitrary causal formula. For example, given a 7-legged spider with no annotation, which will cause $V=Pass$, an explanation could refer to formula such as:
	\[ 
	(O=Unknown \land A=Unknown) \Rightarrow [A \leftarrow Spider] (V=Pass),
	\]
	 which means that when both variables are unknown, adding an annotation will still give a result of $Pass$.
\end{example}

\subsection{General Contrastive Explanation}

The more general case of contrastive explanation is straightforward to project from this definition. We give just the definition for bi-factual explanation.

\begin{definition}{General Contrastive Bi-factual Explanation}
	\label{defn:general-contrastive-bi-factual-explanation}
Given a structural model $M$, a pair of formula-event pairs $\langle (\alpha, \Xex), (\beta, \Xey)\rangle$ is a \emph{general contrastive bi-factual explanation} of $\langle\phi,\psi\rangle$ relative to two epistemic states $\K$ and $\K'$ if and only if the following hold:

\begin{description}
	
	\item [BE1\G] $(M, \u) \models \phi$ for each $(M,\u) \in \K$ and  $(M', \u') \models \psi$ for each $(M,\u') \in \K'$.
	
	\item [BE2\G] for all situations $(M, \u)$ such that $(M, \u) \models \Xex$ and $M \models \alpha$ and all situations $(M',\u')$ such that $(M',\u') \models \Xey$ and $M' \models \beta$, $\langle (\alpha, \Xex), (\beta, \Xey)\rangle$ is a \emph{sufficient bi-factual cause} of $\langle\phi,\psi\rangle$.
	
	\item [BE3\G] $(\alpha,\Xex,\beta,\Xey)$ is minimal ---  there is no tuple $(\alpha',\Xex['],\beta',\Xey[']) \neq (\alpha,\Xex,\beta,\Xey)$ satisfying BE2\G such that $\{M'' \in M(\K) \mid M'' \models \alpha'\} \supseteq \{M'' \in M(\K) \mid M'' \models \alpha\}$, similarly for $\beta$, $\Xex['] \subseteq \Xex$, and $\Xey['] \subseteq \Xey$.
	
	\item [BE4\G] $(M, \u) \models \neg (\Xex)$ for some $(M,\u) \in \K$ and $(M', \u') \models \Xex $ for some (other) $(M',\u') \in \K$; and  $(M, \u) \models \Xey$ for some $(M,\u) \in \K'$ and $(M, \u') \models \neg(\Xey)$ for some (other) $(M,\u') \in \K'$ -- that is, the agent is initially uncertain whether the explanation is true or not.
	
\end{description}
\end{definition}

The general counterfactual explanation case is straightforward to extend from Definition~\ref{defn:general-contrastive-bi-factual-explanation}, however, one important point of difference is that the explanation is not a pair, but a triple, $\langle \alpha, \Xey, \Xey)$. There is no requirement for the second formula $\beta$ because there is only one model to characterise.

\begin{example}
Consider the extended arthropod system in the situation in Example~\ref{example:arthropod-non-contrastive-explanation}, where there is an image of a spider with 7 legs. In this case, the verification passes because $O=Unknown$. The agent knows all variables but is unaware of $F_V$, so does not know the verification procedure, and asks ``Why $V=Pass$ instead of $V=Fail$?''

In this case, the explanation is a formula expressing the semantics of $F_V$, and no variables:
\[
\langle (O=Unknown \lor A=Unknown \Rightarrow V = Pass), L=7, L=8 \rangle 
\]

\end{example}

\subsection{Example: Goal-Directed AI Planning}

Throughout the paper, we have used the two examples of the arthropod system to illustrate ideas. In this section, we consider a different type of AI system: goal-directed planning.

\begin{example}
	\label{example:planning-counterfactual}
	Consider an abstract example of a goal-directed planning system that needs to choose which actions $A_1$, $A_2$, and $A_3$ to apply. Using a simple action language, we define these actions, their preconditions, and their effects as:
   
   \begin{center}
   \begin{tabular}{llll}
   \toprule
      \textbf{Action} & \textbf{Pre} & & \textbf{Effect}\\
      \midrule
      $A_1$ & $P_1$ & $\rightarrow$ & $G_1 \land G_3$\\
      $A_2$ & $P_2$ & $\rightarrow$ & $G_2 \land G_3$\\
      $A_3$ & $true$  & $\rightarrow$ & $P_2$\\
      \bottomrule
    \end{tabular}
    \end{center}
    in which $A_{[1-3]}$ are names of the actions, $G_{[1-3]}$ are propositions modelling goals, and $P_{[1-2]}$ are propositions modelling action preconditions. The planner can apply none, one, or many actions. 
    
    Figure~\ref{fig:planning-causal-graph} shows the causal graph for this, in which $U_{[1-5]}$ are exogenous variables. Variables are Boolean. The structured equations are such that action $A_1$ is selected if $G_1$ or $G_3$ is the goal, and its precondition $P_1$ holds; $A_2$ is selected if $G_2$ or $G_3$ is the goal, and its precondition $P_2$ holds; and $A_3$ is selected if precondition $P_2$ needs to be made true. Note that this does not model the cause of the preconditions  goals becoming true/false, but the \emph{cause} of action selection, which makes the graph appear somewhat inverted. The parent node for each action has both the variables it requires to be true to execute the action as well as the variables the action will change; e.g.\ $A_1$ will be `fired' if $P_1$ is true and $G_1$ is true,  which counter-intuitively models that in the actual planning problem, the goal is currently false and should \emph{become} true. We could also add a node which states whether the goal is true/false and only execute the action if the goal is false, but we  omit this for simplicity. Note that $P_2$ is the parent of $A_3$, modelling that this is $A_3$'s intermediate `goal' -- it makes $P_2$ true, thus enabling $A_2$ to be selected next.
	
	\begin{figure}[!th]
		\centering
		\centering
\begin{tikzpicture}
\begin{scope}[node distance=1.2cm]
\tikzstyle{empty state}=[fill=none,inner sep=0pt,minimum size=2.5mm,opacity=0.0]

\node[empty state] (B)  {};
\node[state] (A1) [left=of B] {$A_1$};
\node[state] (A2) [right=of B] {$A_2$};
%\node[empty state] (C) [right=of A2]   {};
%\node[state] (A1) {$A_1$};
%\node[state] (A2) {$A_2$};
\node[state] (P1) [above left=of A1] {$P_1$};
\node[state] (P2) [above right=of A2] {$P_2$};
\node[state] (G1) [above=of A1]  {$G_1$};
\node[state] (G3) [above=of B]  {$G_3$};
\node[state] (G2) [above=of A2]  {$G_2$};
\node[state] (U1) [above=of P1]  {$U_1$};
\node[state] (U2) [above=of G1]  {$U_2$};
\node[state] (U3) [above=of G3]  {$U_3$};
\node[state] (U4) [above=of G2]  {$U_4$};
\node[state] (U5) [above=of P2]  {$U_5$};

\node[state] (A3) [below right=of P2] {$A_3$};

\path[every node/.style={sloped,anchor=south,auto=false,font=\tiny}]

(U1) edge[]   node[sloped=true]  {$$}   (P1)
(U2) edge[]   node[sloped=true]  {$$}   (G1)
(U3) edge[]   node[sloped=true]  {$$}   (G3)
(U4) edge[]   node[sloped=true]  {$$}   (G2)
(U5) edge[]   node[sloped=true]  {$$}   (P2)
(P1) edge[]   node[sloped=true]  {$$}   (A1)
(G1) edge[]   node[sloped=true]  {$$}   (A1)
(G3) edge[]   node[sloped=true]  {$$}   (A1)
(G3) edge[]   node[sloped=true]  {$$}   (A2)
(G2) edge[]   node[sloped=true]  {$$}   (A2)
(P2) edge[]   node[sloped=true]  {$$}   (A2)
(P2) edge[]   node[sloped=true]  {$$}   (A3)

;
\end{scope}

\end{tikzpicture}
		\caption{Causal graph for goal-directed planning}
		\label{fig:planning-causal-graph}
	\end{figure}
	
	Now consider the case in which $G_1$ and $G_3$ are the goals (while $G_2$ is false) and $P_1$ and $P_2$ are both true, implying that $A_1$ is true and $A_2$ is false. A contrastive question could be: ``Why $A_1$ rather than $A_2$?", which would be modelled as ``Why $(M, \u) \models A_1 \land A_2$ rather than $\neg A_1 \land A_2$?''.	$\K$ (the epistemic state of the explainee) is  such that $G_1$ is known to be true, but the agent is unsure of the other goals and the preconditions.
	
	The contrastive counterfactual cause for this is the pair $\langle (G_1, \neg G_1), (\neg G_2, G_2)\rangle$. That is, for the $A_2$ to be true instead of $A_1$ (CC3), it would require that the goals $G_1$ and $G_2$ are swapped. CC1-2 hold trivially, and CC4 (the difference conditions) holds because there are no common events. CC5 (maximality) holds because changing the values of $G_1$ or the preconditions $P_1$ and $P_2$ do not satisfy the difference condition CC4.
	
	The contrastive \emph{explanation}, however, consists only of  $\langle(\neg G_2, G_2)\rangle$ -- the agent already knows that $G_1$ is true so including $G_1$ would not satisfy both CE3 (the minimality condition) and CE4 (the `meaningful' condition).
\end{example}

\begin{example}
	Consider a bi-factual setting with two situations $\u_1$ and $\u_2$. In both situations, $G_3$ is the only goal. In $\u_1$, precondition $P_1$ is true while $P_2$ is false, and vice-versa for $\u_2$. The explainee agent knows only that action $A_1$ was selected under $\u_1$ and $A_2$ was selected under $\u_2$. The bi-factual explanation for this is $\langle (P_1, \neg P_2), (\neg P_1, P_2)\rangle$. The goals are not included even though the agent does not know their values, because they are the same between the two situations, so do not satisfy the difference condition.
\end{example}

\begin{example}
Finally, consider the example of $A_3$ being selected in order to make $P_2$ true and allow $A_2$ to be selected in the next time step. The goal is $G_2$ and the explainee knows the values of all goal variables and action variables, does not know the values of the preconditions, and asks why $A_3$ rather than $A_2$ . 

The effect of CE4 is that this has no explanation! Intuitively, one may expect that $(\neg P_2, P_2)$ to be offered, however for this to be an explanation, CE4 requires  that there is some situation $\u \in \K$ in which $P_2$ could be true. But this is not possible because the agent knows that $A_2$ is false and $G_2$ is true, which cannot be the case if $P_2$ is true, so no such situation exists. According to the model $M$, there can be only situation in $\K$ where the goals are all known as $\neg G_1$, $G_2$, and $\neg G_3$, and in that situation $\neg P_1$ and $P_2$ hold. This offers the agent a complete explanation already. This makes sense: the agent does not require an explanation because it can infer the values of $P_1$ and $P_2$ itself.

However, consider the case of a general contrastive explanation in which the agent's knowledge is missing part of the structure of the causal graph; specifically, that $P_2$ is the precondition of $A_2$, meaning that the edge $P_2 \rightarrow A_2$ is missing from the graph in Figure~\ref{fig:planning-causal-graph}. Now we have an explanation! In this case, the explanation is $\langle (F_{A_2} = f, \emptyset), (F_{A_2} =  f', \neg P_2)\rangle$, in which $f$ is the definition of $F_{A_2}$ without the precondition, and $f'$ includes the precondition.
\end{example}

\section{Conclusion}
\label{sec:conc}

Using structural causal models,
\citeA{halpern2005causes-part-II} define explanation as a fact that, if found to be true, would constitute an actual cause of a specific event. In this paper, we extend this definition of explanation to consider \emph{contrastive explanations}. Founded on existing research in philosophy and cognitive science, we define two types of contrastive why-questions: counterfactual why--questions (`rather than') and and bi-factual why--questions (`but'). We define `contrastive cause' for these two questions and from this, build a model of contrastive explanation. We show that this model is consistent with well-accepted properties of contrastive explanation, and with alternative definitions.

The aim of this work is to provide a general model of contrastive explanation. While there are many examples of researchers considering counterfactual contrastive questions in explainable artificial intelligence, few consider bi-factual questions. Even fewer exploit the power of the difference condition, instead providing two full explanations: one of the fact and one of the foil. In essence, they consider contrastive questions but not contrastive explanations. The difference condition is what brings power and relevance to contrastive explanations, and as such, giving two complete explanations does not correctly answer the question. We hope that this article serves as a basis for researchers in explainable artificial intelligence to adopt the idea of the difference condition and ultimately give better explanations to people.

\bibliographystyle{plainnat}
%bibliographystyle{elsarticle-num-names-alpha}
\bibliography{explanation,other}

\begin{thebibliography}{39}
\providecommand{\natexlab}[1]{#1}
\providecommand{\url}[1]{\texttt{#1}}
\expandafter\ifx\csname urlstyle\endcsname\relax
  \providecommand{\doi}[1]{doi: #1}\else
  \providecommand{\doi}{doi: \begingroup \urlstyle{rm}\Url}\fi

\bibitem[Akula et~al.(2020)Akula, Wang, and Zhu]{akula2020cocox}
Arjun~R Akula, Shuai Wang, and Song-Chun Zhu.
\newblock {CoCoX}: Generating conceptual and counterfactual explanations via
  fault-lines.
\newblock In \emph{AAAI}, pages 2594--2601, 2020.

\bibitem[Angwin et~al.(2016)Angwin, Larson, Mattu, and
  Kirchner]{angwin2016machine}
Julia Angwin, Jeff Larson, Surya Mattu, and Lauren Kirchner.
\newblock Machine bias.
\newblock \emph{ProPublica, May}, 23, 2016.

\bibitem[Bromberger(1966)]{bromberger1966whyquestions}
Sylvain Bromberger.
\newblock Why--questions.
\newblock In R.~G. Colodny, editor, \emph{Mind and Cosmos: Essays in
  Contemporary Science and Philosophy}, pages 68--111. Pittsburgh University
  Press, Pittsburgh, 1966.

\bibitem[Buchanan and Shortliffe(1984)]{buchanan1984rule}
Bruce Buchanan and Edward Shortliffe.
\newblock \emph{Rule-based expert systems: the MYCIN {experiments} of the
  {Stanford} {Heuristic} {Programming} {Project}}.
\newblock Addison-Wesley, 1984.

\bibitem[Chandrasekaran et~al.(1989)Chandrasekaran, Tanner, and
  Josephson]{chandrasekaran1989explaining}
B~Chandrasekaran, Michael~C. Tanner, and John~R. Josephson.
\newblock Explaining control strategies in problem solving.
\newblock \emph{IEEE Expert}, 4\penalty0 (1):\penalty0 9--15, 1989.

\bibitem[Chin-Parker and Cantelon(2017)]{chin2017contrastive}
Seth Chin-Parker and Julie Cantelon.
\newblock Contrastive constraints guide explanation-based category learning.
\newblock \emph{Cognitive science}, 41\penalty0 (6):\penalty0 1645--1655, 2017.

\bibitem[Dhurandhar et~al.(2018)Dhurandhar, Chen, Luss, Tu, Ting, Shanmugam,
  and Das]{dhurandhar2018explanations}
Amit Dhurandhar, Pin-Yu Chen, Ronny Luss, Chun-Chen Tu, Paishun Ting,
  Karthikeyan Shanmugam, and Payel Das.
\newblock Explanations based on the missing: Towards contrastive explanations
  with pertinent negatives.
\newblock In \emph{Advances in Neural Information Processing Systems}, pages
  592--603, 2018.

\bibitem[Garfinkel(1981)]{garfinkel1981forms}
Alan Garfinkel.
\newblock \emph{Forms of explanation: Rethinking the questions in social
  theory}.
\newblock Yale University Press New Haven, 1981.

\bibitem[Grice(1975)]{grice1975logic}
Herbert~P Grice.
\newblock Logic and conversation.
\newblock In \emph{Syntax and semantics 3: Speech arts}, pages 41--58. New
  York: Academic Press, 1975.

\bibitem[Halpern(2015)]{halpern2015modification}
Joseph~Y Halpern.
\newblock A modification of the halpern-pearl definition of causality.
\newblock In \emph{Proceedings of the 24th International Joint Conference on
  Artificial Intelligence (IJCAI 2015)}, pages 3022--3033, 2015.

\bibitem[Halpern and Pearl(2005{\natexlab{a}})]{halpern2005causes-part-I}
Joseph~Y Halpern and Judea Pearl.
\newblock Causes and explanations: A structural-model approach. part i: Causes.
\newblock \emph{The British Journal for the Philosophy of Science}, 56\penalty0
  (4):\penalty0 843--887, 2005{\natexlab{a}}.

\bibitem[Halpern and Pearl(2005{\natexlab{b}})]{halpern2005causes-part-II}
Joseph~Y Halpern and Judea Pearl.
\newblock Causes and explanations: A structural-model approach. part ii:
  Explanations.
\newblock \emph{The British Journal for the Philosophy of Science}, 56\penalty0
  (4):\penalty0 889--911, 2005{\natexlab{b}}.

\bibitem[Haynes et~al.(2009)Haynes, Cohen, and Ritter]{haynes2009designs}
Steven~R Haynes, Mark~A Cohen, and Frank~E Ritter.
\newblock Designs for explaining intelligent agents.
\newblock \emph{International Journal of Human-Computer Studies}, 67\penalty0
  (1):\penalty0 90--110, 2009.

\bibitem[Hesslow(1983)]{hesslow1983explaining}
Germund Hesslow.
\newblock Explaining differences and weighting causes.
\newblock \emph{Theoria}, 49\penalty0 (2):\penalty0 87--111, 1983.

\bibitem[Hesslow(1988)]{hesslow1988problem}
Germund Hesslow.
\newblock The problem of causal selection.
\newblock \emph{Contemporary science and natural explanation: Commonsense
  conceptions of causality}, pages 11--32, 1988.

\bibitem[Hilton(1990)]{hilton1990conversational}
Denis~J Hilton.
\newblock Conversational processes and causal explanation.
\newblock \emph{Psychological Bulletin}, 107\penalty0 (1):\penalty0 65--81,
  1990.

\bibitem[Kean(1998)]{kean1998characterization}
Alex Kean.
\newblock A characterization of contrastive explanations computation.
\newblock In \emph{Pacific Rim International Conference on Artificial
  Intelligence}, pages 599--610. Springer, 1998.

\bibitem[Krarup et~al.(2019)Krarup, Cashmore, Magazzeni, and
  Miller]{krarup2019model}
Benjamin Krarup, Michael Cashmore, Daniele Magazzeni, and Tim Miller.
\newblock Model-based contrastive explanations for explainable planning.
\newblock In \emph{2nd ICAPS Workshop on Explainable Planning (XAIP-2019)}.
  AAAI Press, 2019.

\bibitem[Lewis(1986)]{lewis1986causal}
David Lewis.
\newblock Causal explanation.
\newblock \emph{Philosophical Papers}, 2:\penalty0 214--240, 1986.

\bibitem[Lim and Dey(2009)]{lim2009assessing}
Brian~Y Lim and Anind~K Dey.
\newblock Assessing demand for intelligibility in context-aware applications.
\newblock In \emph{Proceedings of the 11th international conference on
  Ubiquitous computing}, pages 195--204. ACM, 2009.

\bibitem[Linegang et~al.(2006)Linegang, Stoner, Patterson, Seppelt, Hoffman,
  Crittendon, and Lee]{linegang2006human}
Michael~P Linegang, Heather~A Stoner, Michael~J Patterson, Bobbie~D Seppelt,
  Joshua~D Hoffman, Zachariah~B Crittendon, and John~D Lee.
\newblock Human-automation collaboration in dynamic mission planning: A
  challenge requiring an ecological approach.
\newblock \emph{Proceedings of the Human Factors and Ergonomics Society Annual
  Meeting}, 50\penalty0 (23):\penalty0 2482--2486, 2006.

\bibitem[Lipton(1990)]{lipton1990contrastive}
Peter Lipton.
\newblock Contrastive explanation.
\newblock \emph{Royal Institute of Philosophy Supplement}, 27:\penalty0
  247--266, 1990.

\bibitem[Madumal et~al.(2020)Madumal, Miller, Sonenberg, and
  Vetere]{madumal2020explainable}
Prashan Madumal, Tim Miller, Liz Sonenberg, and Frank Vetere.
\newblock Explainable reinforcement learning through a causal lens.
\newblock In \emph{Proceedings of the Thirty-Fourth {AAAI} Conference on
  Artificial Intelligence}, pages 2493--2500, 2020.

\bibitem[Mercado et~al.(2016)Mercado, Rupp, Chen, Barnes, Barber, and
  Procci]{mercado2016intelligent}
Joseph~E Mercado, Michael~A Rupp, Jessie~YC Chen, Michael~J Barnes, Daniel
  Barber, and Katelyn Procci.
\newblock Intelligent agent transparency in human--agent teaming for multi-uxv
  management.
\newblock \emph{Human Factors}, 58\penalty0 (3):\penalty0 401--415, 2016.

\bibitem[Miller(2018)]{miller2017explanation}
Tim Miller.
\newblock Explanation in artificial intelligence: Insights from the social
  sciences.
\newblock \emph{Artificial Intelligence}, 2018.
\newblock \url{https://arxiv.org/abs/1706.07269}.

\bibitem[Mothilal et~al.(2020)Mothilal, Sharma, and
  Tan]{mothilal2020explaining}
Ramaravind~K Mothilal, Amit Sharma, and Chenhao Tan.
\newblock Explaining machine learning classifiers through diverse
  counterfactual explanations.
\newblock In \emph{Proceedings of the 2020 Conference on Fairness,
  Accountability, and Transparency}, pages 607--617, 2020.

\bibitem[Ruben(1987)]{ruben1987explaining}
David-Hillel Ruben.
\newblock Explaining contrastive facts.
\newblock \emph{Analysis}, 47\penalty0 (1):\penalty0 35--37, 1987.

\bibitem[Slugoski et~al.(1993)Slugoski, Lalljee, Lamb, and
  Ginsburg]{slugoski1993attribution}
Ben~R Slugoski, Mansur Lalljee, Roger Lamb, and Gerald~P Ginsburg.
\newblock Attribution in conversational context: Effect of mutual knowledge on
  explanation-giving.
\newblock \emph{European Journal of Social Psychology}, 23\penalty0
  (3):\penalty0 219--238, 1993.

\bibitem[Sreedharan et~al.(2018)Sreedharan, Srivastava, and
  Kambhampati]{sreedharan2018hierarchical}
Sarath Sreedharan, Siddharth Srivastava, and Subbarao Kambhampati.
\newblock Hierarchical expertise level modeling for user specific contrastive
  explanations.
\newblock In \emph{IJCAI}, pages 4829--4836, 2018.

\bibitem[Stubbs et~al.(2007)Stubbs, Hinds, and Wettergreen]{stubbs2007autonomy}
K.~Stubbs, P.~Hinds, and D.~Wettergreen.
\newblock Autonomy and common ground in human-robot interaction: A field study.
\newblock \emph{IEEE Intelligent Systems}, 22\penalty0 (2):\penalty0 42--50,
  2007.

\bibitem[Swartout and Moore(1993)]{swartout1993explanation}
William~R Swartout and Johanna~D Moore.
\newblock Explanation in second generation expert systems.
\newblock In \emph{Second Generation Expert Systems}, pages 543--585. Springer,
  1993.

\bibitem[Temple(1988)]{temple1988contrast}
Dennis Temple.
\newblock The contrast theory of why--questions.
\newblock \emph{Philosophy of Science}, 55\penalty0 (1):\penalty0 141--151,
  1988.

\bibitem[Van~Bouwel and Weber(2002)]{van2002remote}
Jeroen Van~Bouwel and Erik Weber.
\newblock Remote causes, bad explanations?
\newblock \emph{Journal for the Theory of Social Behaviour}, 32\penalty0
  (4):\penalty0 437--449, 2002.

\bibitem[Van~Fraassen(1980)]{van1980scientific}
Bas~C Van~Fraassen.
\newblock \emph{The scientific image}.
\newblock Oxford University Press, 1980.

\bibitem[Waa et~al.(2018)Waa, Diggelen, Bosch, and
  Neerincx]{waa2018contrastive}
J~Waa, J~van Diggelen, K~Bosch, and M~Neerincx.
\newblock Contrastive explanations for reinforcement learning in terms of
  expected consequences.
\newblock In \emph{Proceedings of the Workshop on Explainable AI at IJCAI},
  2018.

\bibitem[Wachter et~al.(2017)Wachter, Mittelstadt, and
  Russell]{wachter2017counterfactual}
Sandra Wachter, Brent Mittelstadt, and Chris Russell.
\newblock Counterfactual explanations without opening the black box: Automated
  decisions and the gdpr.
\newblock \emph{Harv. JL \& Tech.}, 31:\penalty0 841, 2017.

\bibitem[Wang et~al.(2019)Wang, Yang, Abdul, and Lim]{wang2019designing}
Danding Wang, Qian Yang, Ashraf Abdul, and Brian~Y Lim.
\newblock Designing theory-driven user-centric explainable ai.
\newblock In \emph{Proceedings of the 2019 CHI conference on human factors in
  computing systems}, pages 1--15, 2019.

\bibitem[Winikoff(2017)]{winikoff2017debugging}
Michael Winikoff.
\newblock Debugging agent programs with {Why?}: Questions.
\newblock In \emph{Proceedings of the 16th Conference on Autonomous Agents and
  MultiAgent Systems}, AAMAS '17, pages 251--259. IFAAMAS, 2017.

\bibitem[Ylikoski(2007)]{ylikoski2007idea}
Petri Ylikoski.
\newblock The idea of contrastive explanandum.
\newblock In \emph{Rethinking explanation}, pages 27--42. Springer, 2007.

\end{thebibliography}

%\appendix
%\input{lipton}

\end{document}